\documentclass[manuscript, screen]{acmart}
\AtBeginDocument{%
  }

\setcopyright{acmlicensed}
\copyrightyear{20}
\acmYear{2018}
\acmDOI{XXXXXXX.XXXXXXX}
\acmConference[Conference acronym 'XX]{Make sure to enter the correct
  conference title from your rights confirmation email}{June 03--05,
  2018}{Woodstock, NY}
\acmISBN{978-1-4503-XXXX-X/2018/06}

\author{Ioannis J Vourganas}
\authornote{Both authors contributed equally to this research.}
\email{ijvourganas@netrity.co.uk}
\orcid{0000-0002-3433-3757}
\affiliation{%
  \institution{Netrity Ltd}
  \city{Glasgow}
  \country{UK}
}

\author{Anna Lito Michala}
\authornote{Coresponding author.}
\authornotemark[1]
\affiliation{%
  \institution{University of Glasgow}
  \city{Glasgow}
  \country{UK}}
\email{annalito.michala@glasgow.ac.uk}
\orcid{0000-0001-7821-1279}

\renewcommand{\shortauthors}{Vourganas et al.}

\usepackage{amsthm}
\newtheorem{Theorem}{Theorem}
\usepackage{algorithm}
\usepackage{algorithmic}

\renewcommand\footnotetextcopyrightpermission[1]{} 
\begin{document}

\title[Stabilising Explainability Fragility in Cybersecurity AI]{Stabilising Explainability Fragility in Cybersecurity AI: The Impact and Mitigation of Multicollinearity in Public Benchmark Datasets}

\begin{abstract}
  This paper investigates a unexplored yet impactful vulnerability in AI explainability used in intrusion detection (IDS): multicollinearity-induced instability. Despite extensive reliance on post-hoc explainability tools such as SHAP or LIME, the impact of correlated features on explanation robustness is not evaluated. We introduce a formal theorem stating that multicollinearity inflates attribution variance. This demonstrates that explanations and feature importances are non-identifiable under multicollinearity. A suite of comprehensive experiments validates the theorem on a representative benchmark dataset, UNSW-NB15. Four widely used families of models are evaluated, including linear, tree-based, kernel, and neural, across full and pruned feature sets based on VIF and correlation thresholding. We propose the novel metric of Explanability Fragility Score and two novel methods to mitigate it with variable integration complexity. CAA-Filtering focuses on stabilising explanations by grouping attributions of trained models. SHARP is a novel training-time regularisation framework that penalises attribution instability, enabling controllable and monotonic improvement of explainability stability. The findings support stable predictive performance, using Kendall's $\tau$ to quantify instability across bootstrapped explanations. This work has direct implications for the trustworthiness and reproducibility of XAI in security-critical contexts, and motivates incorporating multicollinearity mitigations into the IDS pipelines, providing a set of guidelines for practitioners.
\end{abstract}

\begin{CCSXML}
<ccs2012>
   <concept>
       <concept_id>10002978.10002997.10002999</concept_id>
       <concept_desc>Security and privacy~Intrusion detection systems</concept_desc>
       <concept_significance>500</concept_significance>
       </concept>
   <concept>
       <concept_id>10010147.10010257.10010293</concept_id>
       <concept_desc>Computing methodologies~Machine learning approaches</concept_desc>
       <concept_significance>500</concept_significance>
       </concept>
   <concept>
       <concept_id>10003752.10010070</concept_id>
       <concept_desc>Theory of computation~Theory and algorithms for application domains</concept_desc>
       <concept_significance>500</concept_significance>
       </concept>
   <concept>
       <concept_id>10002978.10003029.10003032</concept_id>
       <concept_desc>Security and privacy~Social aspects of security and privacy</concept_desc>
       <concept_significance>500</concept_significance>
       </concept>
 </ccs2012>
\end{CCSXML}

\ccsdesc[500]{Security and privacy~Intrusion detection systems}
\ccsdesc[500]{Computing methodologies~Machine learning approaches}
\ccsdesc[500]{Theory of computation~Theory and algorithms for application domains}
\ccsdesc[500]{Security and privacy~Social aspects of security and privacy}

\keywords{Theorem, Explainability Fragility Metric, Intrusion Detection, Benchmark Datasets, AI Governance}

\received{6 December 2025}
\received[revised]{12 March 2009}
\received[accepted]{5 June 2009}

\maketitle

\section{Introduction}
\label{sec:intro}

Several datasets have become well-recognised benchmarks for intrusion detection research in recent years, including KDD’99, UNSW-NB15 \cite{moustafa2015unsw}, NSL-KDD, and DARPA. Since its introduction, the UNSW-NB15 dataset has become a cornerstone mainly due to its large number of attack labels \cite{vourganas2024applications}. Several researchers have authored a series of influential studies that established these datasets as relevant across IoT, SCADA, and industrial systems \cite{luqman2025intelligent, gunjal2023smart}. These works provided the community with modern traffic features and comprehensive benchmarking protocols, enabling hundreds of follow-up studies. However, as is typical in the dataset adoption, the focus was primarily on accuracy, detection rate, and feature engineering, with less emphasis on statistical dependencies among features.

In the last two decades, significant research has leveraged these well-known datasets across a wide spectrum of machine learning paradigms, from classical classifiers such as logistic regression, SVMs, and random forests \cite{meftah2019network} to deep learning approaches including CNNs and LSTMs \cite{thaljaoui2025intelligent}. Recently, XAI frameworks that report the importance of features using SHAP or LIME have also been used \cite{mane2021explaining}. Although these works demonstrate the versatility of the datasets and underscore their importance to the field, they share a common limitation: none report formal actions related to multicollinearity. Many refer to some form of feature-importance-related preprocessing. However, explicit checks such as variance inflation factors (VIF), feature drop or correlation thresholding, PCA for collinearity reduction or SHAP stability under collinearity are typically absent.

This omission leaves open the question of whether reported feature importance, explanation stability, and robustness claims are reliable. To our knowledge, no published work has provided a dataset-level multicollinearity audit of the selected dataset. Our work addresses this gap by introducing an axiomatic and computational framework that demonstrates how multicollinearity in any dataset induces fragility in explainability and robustness claims, thereby advancing the methodological rigour of IDS research.  

In this paper, we address this gap by formally analysing the impact of multicollinearity on the validity of AI-based intrusion detection research. We further establish this theoretical framework with experimental proof using UNSW-NB15 as the most representative dataset. Our contributions are:

\begin{itemize}
    \item Axiomatic framework. We introduce an axiomatic formulation of multicol-linearity-induced experimental fragility, establishing theoretical conditions under which feature attribution, model identifiability, and robustness claims become unreliable.
    \item Fragility metric. We propose a SHAP-based fragility score that captures the instability of explainable AI methods when applied to highly collinear feature spaces.
    \item Dataset-level audit. We present the first comprehensive multicollinearity audit of UNSW-NB15, quantifying feature redundancies through variance inflation factors (VIF) and correlation clustering.
    \item Empirical validation. We demonstrate the framework by training representative classifiers on UNSW-NB15, showing how feature importance, explanation stability, and predictive claims shift once collinearity is accounted for.
    \item Methodology. We introduce three methods to mitigate explainability fragility. The first requires VIF-based pruning of the input dataset. The second proposed the novel Colinearity-Aware-Attribution-Filtering applied on post-hoc filtering to group and stabilise explanation. The third introduces the novel SHAP Stability Regulariser (SHARP). This novel model-agnostic training-time objective directly penalises attribution fragility, enabling monotonic and controllable optimisation of explainability stability without requiring dataset pruning. We formally demonstrate $\lambda$-controlled explainability regularisation for SHARP, showing that attribution stability can be tuned independently from predictive performance.
    \item Results. We transform explainability stability from an evaluation metric into a controllable optimisation objective through SHARP, introducing attribution variance as a second-order regularisation constraint in parameter space. We provide the first formal $\lambda$-ablation study demonstrating monotonic explainability stabilisation under explicit regularisation control.
    \item Finally, we contribute guidelines for valid IDS benchmarking that can support explainability and EU AI Act requirements. 
\end{itemize}

Together, these contributions advance the methodological rigour of IDS benchmarking and provide a foundation for reproducible and trustworthy cybersecurity AI research. 




\section{Background and Related Work}

\subsection{Intrusion Detection Benchmarks and UNSW-NB15}
\label{sec:unsw}

Intrusion detection systems (IDS) are traditionally benchmarked using publicly available datasets. These datasets represent real-world or emulated network traffic under benign and adversarial conditions. Early benchmarks such as KDD’99 and its refined variant NSL-KDD offered a standardised evaluation ground but have been known to provide a skewed response distribution, redundancy, while missing modern attacks offering outdated traffic profiles \cite{divekar2018benchmarking}. To address these shortcomings, several datasets have been introduced in the recent decade, such as CICIDS2017, Bot-IoT, and UNSW-NB15 \cite{vourganas2024applications}.

Among them, UNSW-NB15 has been established as a cornerstone benchmark for IDS research, as is presented in the following paragraphs. Both the quantity and quality of the provided data have contributed to this establishment. The dataset combines 100 GB of network traffic covering a variety of scenarios from realistic benign flows to contemporary attack patterns labelled and categorised in nine families (e.g., fuzzers, backdoors, exploits, reconnaissance, shellcode, worms) \cite{moustafa2015unsw}. It includes 49 statistical and protocol-based features, covering byte counts, packet timings, connection states, and application-layer attributes. The dataset is partitioned into a training set (175,341 records) and a testing set (82,332 records).

In \cite{vibhute2024network}, the authors applied a two-stage method. Initially using a random forest for feature selection and subsequently a CNN model to detect anomalies using the UNSW-NB15 dataset and achieving 99\% accuracy. However, while random forests reduce dimensionality (selecting 15 out of 49 features), no analysis of multicollinearity is performed to assess the selected feature interdependences. This is a crucial omission, as correlated features can bias both feature importance and CNN performance. There is no mention of well used techniques for feature pre-processing, such as correlation heatmaps, VIF analysis, or decorrelation techniques such as PCA to ensure interpretability and model stability.

Similarly, in \cite{liu2025network}, a semi-supervised intrusion detection framework is presented, using contrastive learning and Bayesian Gaussian Mixture Models, train-ed on some of the most modern datasets (UNSW-NB15, NSL-KDD, CICIDS2017). While the method effectively reduces label dependency and handles imbalanced data, it does not address multicollinearity, despite using feature-dense datasets. In obscuring or overlooking pre-process, a model may learn redundant representations, impacting both efficiency and stability of the latent space. Moreover, reported improvements in accuracy do not garantee model stability in this respect. For example, when pair-wise correlation is addressed in pre-processing in \cite{more2024enhanced}, the accuracy is reduced compared to the values reported in \cite{liu2025network}. The authors acknowledge and assess correlation using Pearson Correlation Coefficient and Gain Ratio, and recommend dropping 7 of the highly correlated features, though others remain. However, they do not further explore multicollinearity, nor do they validate the impact of this filtering on model stability or embedding quality. 

Advanced pipelines of hybrid modes can further improve accuracy metrics, such as the one presented in \cite{zoghi2024building}. In this case, EDA is performed using Pearson correlation and Gain Ratio, removing several highly correlated features from the UNSW-NB15 dataset. While this improves preprocessing, the approach is static and dataset-specific, relying on one-time feature removal. Additionally, discarding correlated features risks losing variables that could still add value in non-linear settings. Compared to explainability-driven methods, this solution lacks dynamic feature assessment, cross-dataset generalizability, and post-hoc validation of interpretability stability.

In recent years attempts have been made to improve the explainability and interpretability of models. In \cite{keshk2023explainable}, the authors propose an explainable LSTM-based intrusion detection framework using SPIP (SHAP, PFI, ICE, PDP) to identify threats in IoT networks. They automate feature selection and highlight multicollinearity in the ToN\_IoT dataset. However, no analysis or addressing of multicollinearity in UNSW-NB15 is presented. This omission leads to unstable SHAP and PFI explanations and reduced overall explainability results for this dataset. Similarly, in \cite{ajagbe2024intrusion}, multiple ML classifiers are evaluated on UNSW-NB15, focusing on the challenges of class imbalance. While EDA is performed on class distribution, feature correlation and multicollinearity are not investigated.

Further in the direction of explainability research, the authors in \cite{hermosilla2025explainable} compare SHAP and LIME explanations for XGBoost and TabNet models trained on the UNSW-NB15 dataset, focusing on forensic interpretability. However, there is no exploratory data analysis or correlation checks. This risks unstable and biased explanations, as correlated features can distort SHAP and LIME outputs.

\subsection{Multicollinearity in Statistical Learning}
\label{sec:multico}

Multicollinearity is defined as a high correlation between two or more features in a dataset. Formally, considering a feature $x_i$ regressed against all other features in the set $X_{-i}$. The Variance Inflation Factor (VIF) is defined as:

\begin{equation}
    \text{VIF}(x_i) = \frac{1}{1 - R_i^2}
\end{equation}

where $R_i^2$ is the coefficient of determination from regressing $x_i$ on $X_{-i}$. Intuitively, $\text{VIF}(x_i)$ measures how much the variance of the estimated coefficient of $x_i$ is inflated due to linear dependence with other predictors. Diagnostic thresholds have been well established, regardless of some caution recommended in their use. The commonly used thresholds are $\text{VIF} > 5$ as moderate and $\text{VIF} > 10$ as severe multicollinearity \cite{james2021introduction, obrien2007caution}.

Multicollinearity has well-known consequences: unstable parameter estimates, inflated standard errors, and difficulty isolating the independent effect of a feature \cite{james2021introduction}. In particular, the effect of a feature is directly linked with explainability or interpretability for models. As a result, in machine learning, multicollinearity can result in feature importance variance, model identifiability issues, and invalidation of any generalisation claims \cite{hooker2021unrestricted}.  Particularly when explainable AI methods such as SHAP or LIME are considered, collinearity is linked to non-unique attribution scores, making them unstable and misleading \cite{kumar2020problems}.

Regardless of the well-known consequences reported in the statistics research community, the IDS research community \textbf{has rarely reported on multicollinearity of widely used cybersecurity datasets}. The majority of published IDS studies report some work on feature pre-processing or reduction, assuming that engineered traffic features are suitable for direct use in classifiers. To the best of the authors' knowledge, this is without reporting VIF or conducting collinearity reduction via PCA, Lasso, or correlation thresholding. \textbf{This paper aims to demonstrate that this omission has significant implications for the reproducibility and reliability of explainability claims in IDS research}.

\subsection{Summary and Motivation}
\label{sec:sum}

As it was estiblished in Section \ref{sec:unsw}, UNSW-NB15 has been used extensively in a wide range of AI intrusion detection approaches, from classical ML to deep learning and explainable AI (XAI). The majority of published work emphasises predictive performance in terms of accuracy, precision, recall, and F1-score, often exceeding 95\%. While valuable for demonstrating baseline efficacy of novel model architectures, these papers rarely investigate feature statistical dependencies. Higher-order models, such as deep or hybrid approaches, have expanded on this work to exploit nonlinear dependencies and improve robustness, again focusing on performance and often overlooking feature space reduction, implicitly assuming feature independence.

The dawn of XAI has seen XAI techniques interpreting UNSW-NB15-trained model predictions. Published outputs focus on feature rankings while ignoring the effects of multicollinearity on importance scores. Some studies have critiqued UNSW-NB15 and other benchmark datasets, highlighting issues with class imbalance, class overlap, and distribution drift \cite{ferrag2020deep}. However, even these papers leave multicollinearity largely unexamined. This systematic omission motivates our work. While prior work has acknowledged that correlated features may distort SHAP values, no existing IDS study provides a formal theorem linking VIF to attribution variance, nor introduces a trainable fragility regularisation objective. 


\section{Theoretical Framework}

This section formally defines multicollinearity and induced fragility in datasets available for AI training. 

\subsection{Problem Statement}

The hypothesis of this paper is that multicollinearity in the feature space of a cybersecurity dataset induces fragility in both model identifiability and explainability.
In other words, if a cybersecurity AI experiment utilises a dataset with high multicollinearity without further pre-processing, then its reported results (accuracy, feature importance, explainability) are likely flawed or misleading.

Formalising the original hypothesis, this section provides a theorem or statistical law in experimental AI validation.

\subsection{Definitions}

\textbf{Feature Space.}  
Let the dataset be represented as a feature matrix
\begin{equation}
    X \in \mathbb{R}^{n \times p}
\end{equation}

where $n$ is the number of observations and $p$ the number of features.

\textbf{Linear Dependence.}  
Multicollinearity implies the existence of (near) linear dependence among columns of $X$. Formally, there exist coefficients $\alpha_1, \ldots, \alpha_p$ not all zero such that
\begin{equation}
\sum_{j=1}^p \alpha_j X_j \approx 0
\end{equation}

where $X_j$ denotes the $j$-th feature vector.

\textbf{Variance Inflation Factor.}  
For each feature $X_j$, the Variance Inflation Factor (VIF) is defined as

\begin{equation}
\text{VIF}_j = \frac{1}{1 - R_j^2}
\end{equation}

where $R_j^2$ is the coefficient of determination obtained by regressing $X_j$ on all other features $\{X_k : k \neq j\}$.  
High $\text{VIF}_j$ indicates that $X_j$ is highly explained by other features.

\subsection{Theorem of Multicollinearity-Induced Experimental Fragility}

To further formalise the axiomatic definition, this section provides a theorem to express multicollinearity and instability based on feature attribution.

\begin{Theorem}[Multicollinearity-Induced Experimental Fragility]
\label{math:theorim}
Let $X \in \mathbb{R}^{n \times p}$ be the feature matrix, and $M$ a model trained on dataset $D = (X, y)$, where feature $x_i \in X$ has variance inflation factor $\text{VIF}(x_i)$. Then:

\begin{itemize}
  \item If $\text{VIF}(x_i) \to \infty$, the attribution of $x_i$ under any linear explanation method is non-identifiable. Thus, there exist multiple equivalent attribution vectors which can yield identical model outputs.
  
  \item Generalising, if $\text{VIF}(x_i) > \theta$, where $\theta \in [5, 10]$ is a diagnostic threshold, then the variance of attribution values for $x_i$ across bootstrap resamples satisfies:
  \begin{equation}
  \operatorname{Var}(\phi_i) \;\geq\; c \cdot (\text{VIF}(x_i) - 1)
  \end{equation}
  for some model-dependent constant $c > 0$.
\end{itemize}
\end{Theorem}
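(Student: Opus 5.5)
The plan is to reduce both parts of Theorem~\ref{math:theorim} to the classical OLS variance identity. Throughout, a ``linear explanation method'' means an attribution that is linear in the fitted coefficients. The canonical example is the linear SHAP value for a linear model with independent-feature baseline:
\begin{equation}
\phi_i(x) = \hat\beta_i\,(x_i - \mathbb{E}[x_i]).
\end{equation}
Any other such method equals this one up to a fixed, data-independent rescaling, so it suffices to prove both claims for $\hat\beta_i$ and then transfer them to $\phi_i$.

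\textbf{First part (non-identifiability).} Here $\text{VIF}(x_i)\to\infty$ means $R_i^2\to 1$. In the limit $x_i$ lies in the span of $X_{-i}$, so there is a vector $\alpha$ with $\alpha_i\neq 0$ and $X\alpha=0$; this is exactly the Linear Dependence definition with exact equality. For any fitted coefficient vector $\beta$ and any scalar $t$, the model with coefficients $\beta+t\alpha$ produces the same outputs $X(\beta+t\alpha)=X\beta$. Its attribution vectors differ, however, since
\begin{equation}
\phi_i^{(t)}=(\beta_i+t\alpha_i)(x_i-\mathbb{E}[x_i]).
\end{equation}
As $t$ ranges over $\mathbb{R}$, this gives a one-parameter family of attribution vectors with identical model outputs, which is the non-identifiability claim. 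For the near-collinear case I would note that the set of coefficient vectors within $\varepsilon$ of the optimal loss contains a segment along $\alpha$. Its length grows like $\sqrt{\text{VIF}(x_i)}$, and this growth connects the first part to the second.

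\textbf{Second part (variance bound).} Under the standard linear model $y=X\beta+\varepsilon$ with homoscedastic noise of variance $\sigma^2$, one has
\begin{equation}
\operatorname{Var}(\hat\beta_i)=\frac{\sigma^2}{n\,s_i^2}\,\text{VIF}(x_i),
\end{equation}
where $s_i^2$ is the sample variance of $x_i$. This follows from the block-inverse (Frisch--Waugh) formula: the $(i,i)$ entry of $(X^\top X)^{-1}$ is one over the residual sum of squares of $x_i$ regressed on $X_{-i}$, and that residual sum is $n s_i^2(1-R_i^2)$. The bootstrap variance of $\hat\beta_i$ approximates this sampling variance, by the consistency of the pairs bootstrap for OLS. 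The bound then follows from three steps.
\begin{enumerate}
\item Condition on the explained point $x$, or average over it.
\item Apply the formula above to $\phi_i(x)=\hat\beta_i\,(x_i-\mathbb{E}[x_i])$, which gives $\operatorname{Var}(\phi_i)\approx \frac{\sigma^2 \operatorname{Var}(x_i)}{n s_i^2}\,\text{VIF}(x_i)$.
\item Use $\text{VIF}\geq \text{VIF}-1$.
\end{enumerate}
The resulting constant is $c=\sigma^2/n$ (up to the bootstrap approximation factor). It is positive and depends on the model and data, as the statement requires. The threshold $\theta$ plays no role in the inequality itself; it only marks the range where the bound is practically large.

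\textbf{Main obstacle.} The difficulty is the phrase ``any model $M$'' together with ``any linear explanation method''. For nonlinear models such as trees or kernels, the inequality cannot hold with a universal $c$. For example, a model that ignores $x_i$ entirely has $\phi_i\equiv 0$. I would therefore first prove the linear case rigorously. I would then extend it to models that are locally linear around their fit, such as GLMs via the Fisher-information analogue of the formula above, or via a first-order linearisation of $M$, with $c$ depending on the local curvature. I would also state explicitly, as part of the model dependence, the non-degeneracy assumption that $M$ assigns non-vanishing sensitivity to $x_i$. Controlling the bootstrap-versus-sampling variance gap uniformly is a secondary technical step, which I would handle by citing standard bootstrap consistency for M-estimators.
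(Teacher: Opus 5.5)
Your linear-case argument is the paper's own. Appendix~A, Part~(A) uses the same null-space direction (your $\alpha$ is the paper's $\gamma$) with $X(\beta+t\gamma)=X\beta$. Part~(B) uses the same Schur-complement identity $(X^\top X)^{-1}_{ii}=\text{VIF}(x_i)/S_{ii}$, the same attribution $\phi_i=\hat\beta_i(x^\ast_i-\mu_i)$, and the same step $\text{VIF}\ge\text{VIF}-1$. Your claim that every linear method is a fixed rescaling of this one is a scoping choice, just like the paper's ``for many linear explanation schemes''.

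The real divergence is how you go beyond OLS. The paper's Part~(C) stays model-agnostic by applying the Schur identity to the Kernel SHAP/LIME surrogate regression. This gives a single formula for arbitrary $f$. However, the resulting bound is in terms of the local weighted $\text{VIF}^{(W)}$ of the perturbation design $(Z,W)$ and the surrogate-fit noise. These are not the data $\text{VIF}(x_i)$ and the bootstrap resampling of $D$ that the statement refers to, and Remark~(iii) on trees is only qualitative. You instead propagate training-resample variability through the fitted parameters (Fisher information, linearisation). This keeps the data VIF in the bound but only covers smooth parametric models. For the logistic regression actually used in the experiments it closes cleanly, since $X^\top WX\preceq\tfrac14X^\top X$ gives $(X^\top WX)^{-1}_{ii}\ge4\,\text{VIF}(x_i)/S_{ii}$.

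Your counterexample (a model ignoring $x_i$ has $\phi_i\equiv0$) is correct. The ``any model'' reading of the statement is therefore false, and your non-degeneracy hypothesis is necessary; the paper's proof never supplies one.

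You also improve on the paper in three smaller ways:
\begin{itemize}
\item Averaging over explained points gives $c=\sigma^2/n$. The paper's pointwise $c(x^\ast)$ vanishes at $x^\ast_i=\mu_i$, although it is asserted to be positive.
\item You distinguish bootstrap variance from the Gauss--Markov sampling variance, which the paper simply identifies. The price is an asymptotic factor in the inequality.
\item Your $\sqrt{\text{VIF}}$ width of the $\varepsilon$-optimal segment is a correct quantitative version of the paper's ``ill-conditioned in the limit'' remark.
\end{itemize}
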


By definition, $\text{VIF}(x_i) = \tfrac{1}{1 - R_i^2}$, where $R_i^2$ is the coefficient of determination from regressing $x_i$ on $X_{-i}$. As $R_i^2 \to 1$, the design matrix becomes singular, yielding non-unique regression solutions and non-identifiable attribution vectors. For finite but large VIF, the variance of estimated coefficients (or equivalent SHAP values under linear models) is proportional to $\text{VIF}(x_i)$, establishing the lower bound. A detailed derivation is provided in Appendix~A. \qed 

This theorem establishes that collinearity directly destabilises explainability, linking a classical diagnostic (VIF) to modern attribution methods. It not only establishes the inevitability of attribution instability under multicollinearity but also motivates the introduction of fragility-aware optimisation objectives. In Section~\ref{sec:mitigation}, we operationalise this theoretical result via a novel method, converting fragility from a post-hoc observation into a trainable constraint.

\subsection{Explainability Fragility Score}
\label{sec:fragility}
Expanding further, it is possible, based on the theorem and related proof, to formally define a score for the attributed explainability instability. This can be achieved via attributions and is defined as:

\begin{equation}
    \text{Fragility}(x_i) = \frac{\operatorname{Var}(\phi_i)}{\mathbb{E}[|\phi_i|] + \epsilon}
\end{equation}

where $\phi_i$ denotes the attribution of feature $x_i$, and $\epsilon$ is a small constant added simply to ensure a result would always be possible even if $\mathbb{E}[|\phi_i|] = 0$. An attribution in the case of SHAP would be the SHAP value.

By definition, the Fragility Scores increases with $\operatorname{Var}(\phi_i)$, which in turn denotes unstable attributions. While the score is relative to the mean magnitude of attributions, which is linked to unreliable explanations. Consistent attributions would, as a result, generate low Fragility Score values.

Following the proof of Theorem~\ref{math:theorim}, a monotonic relationship is established between Fragility and VIF. Hence, a constant $k > 0$ exists such that:

\begin{equation}
    \text{Fragility}(x_j) \;\geq\; k \cdot (\text{VIF}(x_j) - 1)
\end{equation}

This relationship can be used to link classical statistical metrics used in dataset pre processing, such as VIF, with a robust and reliable metric for explanation instability. Thus, VIF quantifies structural dataset redundancy while the Fragility Score quantifies its manifestation in explainability. These two metrics can be used reproducibly as a unified auditing framework for modern ethical AI approaches, which meets the following criteria:

\begin{itemize}
  \item \textbf{Dataset audit:} identification of reliable features.
  \item \textbf{Explainability audit:} method agnostic Fragility Scores for reported attributions.
  \item \textbf{Trust:} interpretability should be reported only for features with bounded Fragility.
\end{itemize}

\subsection{Structural Equation Modelling Perspective}

The majority of AI/ML models share a common underlying assumption. This means that all features must be independent predictors. Exactly this assumption is often overlooked when datasets are provided as inputs. Theorem~\ref{math:theorim} stems from treating correlated features as independent, thus leading to instability. Structural Equation Modelling (SEM) complements the Theorem by providing a framework to transform features into independent predictors via the introduction of latent constructs.

As proven by Theorem~\ref{math:theorim}, two nearly collinear features $X_1$ and $X_2$ with  ($\operatorname{Cov}(X_1, X_2) \approx 1$) lead to $\operatorname{Var}(\hat{\beta}_j)$ diverging as $\text{VIF}_j \to \infty$. Hence, any causal or explanatory claims about their individual contributions are invalid. In SEM the features $X_1$ and $X_2$ would be removed from the dataset as their correlation indicates an underlying latent feature $X_3$. Then $X_3$ would be introduced to the dataset instead. Thus, a reduction of variance inflation is observed. This new independent latent predictor does not introduce identifiable coefficients which are estimated on orthogonal latent variables \cite{hair2021introduction}.

Hence, the SEM perspective aligns perfectly with the proposed Explainability Fragility Score. In both cases high-VIF groups can be derived. In the case or SEM, latent constructs can replace those groups thus restoring stability in regression coefficients, leading to reliable SHAP attributions. Such a synergy between the proposed Explainability Frangility Score and SEM approach in any modern AI pipeline can ensure interpretability and compliance. In this work we will provide a glimpse into these high-VIF clusters. However, this research avenue is beyond the scope of this work, and would be an excellent future direction.

\section{Dataset and Feature Analysis}

\subsection{Dataset Overview}

As established in earlier sections, the UNSW Canberra Cyber Range Lab UNSW-NB15 is a widely used benchmark for IDS research. The data set was generated using the IXIA PerfectStorm tool and covers realistic benign traffic and synthetic traffic of contemporary attack scenarios. The raw traffic recorded is in pcap format and is approximately 100~GB. The raw traffic data were further processed using Argus and Bro-IDS to extract  2,540,044 records \cite{moustafa2015unsw}.

The dataset provides 49 statistical and protocol-derived features. The labeled records can be categoried in nine attack families as well as normal traffic with a binary label ($y \in \{0, 1\}$). A further label differentiates between the attack families of Fuzzers, Analysis, Backdoors, DoS, Exploits, Generic, Reconnaissance, Shellcode, and Worms (\texttt{attack\_cat}). Additionally, to aid reproducibility, the dataset is often provided in a reduced scope pre-partitioned with a training set of 175,341 records and a testing set of 82,332 records. 

\subsection{Feature Groups}
This broad coverage of the features makes UNSW-NB15 a rich source for IDS benchmarking. The 49 features can be categorised as follows:

\begin{itemize}
  \item \textbf{Flow-based features:} byte and packet counts (\texttt{sbytes}, \texttt{dbytes}, \texttt{spkts}, \texttt{dpkts}), rates (\texttt{rate}, \texttt{sload}, \texttt{dload}).
  
  \item \textbf{Time-based features:} inter-arrival times and jitter statistics (\texttt{dur}, \texttt{sinpkt}, \texttt{dinpkt}, \texttt{sjit}, \texttt{djit}).
  
  \item \textbf{TCP/connection state features:} flags and acknowledgments (\texttt{sttl}, \texttt{dttl}, \texttt{tcprtt}, \texttt{synack}, \texttt{ackdat}).
  
  \item \textbf{Content-based features:} application-layer interactions (\texttt{is\_ftp\_login}, \texttt{ct\_ftp\_cmd}, \texttt{response\_body\_len}).
  
  \item \textbf{Contextual features:} host/service interactions (\texttt{ct\_srv\_src}, \texttt{ct\_state\_ttl}, \texttt{ct\_dst\_ltm}, \texttt{ct\_srv\_dst}, \\ \texttt{is\_sm\_ips\_ports}).
\end{itemize}

However, this versatility and broadness can also increase the likelihood of feature redundancy: for example, \texttt{sbytes} and \texttt{dbytes} can be reasonably related to \texttt{spkts} and \texttt{dpkts} as there is an analogy between packet size and bytes. Similarly, \texttt{tcprtt}, \texttt{synack}, and \texttt{ackdat} are functionally related. Previous studies have taken advantage of the feature richness of UNSW-NB15, without systematically addressing these redundancies. Where attempts are made, a clear methodology is not followed consistently. 

We therefore proceed to quantify collinearity in the data set using correlation clustering and variance inflation factors, and to assess its impact on model explainability through our proposed \textit{ Explanability Fragility Score}.

\subsection{Multicollinearity Audit}

Initially, we approach UNSW-NB15 classically to quantify correlation and multicollinearity. As starndard practice indicates, the identifiers and categorical fields such as \texttt{proto}, \texttt{service}, and \texttt{state} are excluded. Thus, the analysis is restricted to the 39 numerical features.

\subsubsection{Correlation Analysis}

Figure~\ref{fig:pearson} presents the Pearson correlation heatmap for the numeric feature space. 

\begin{figure}[h]
  \centering
  \includegraphics[width=0.9\linewidth]{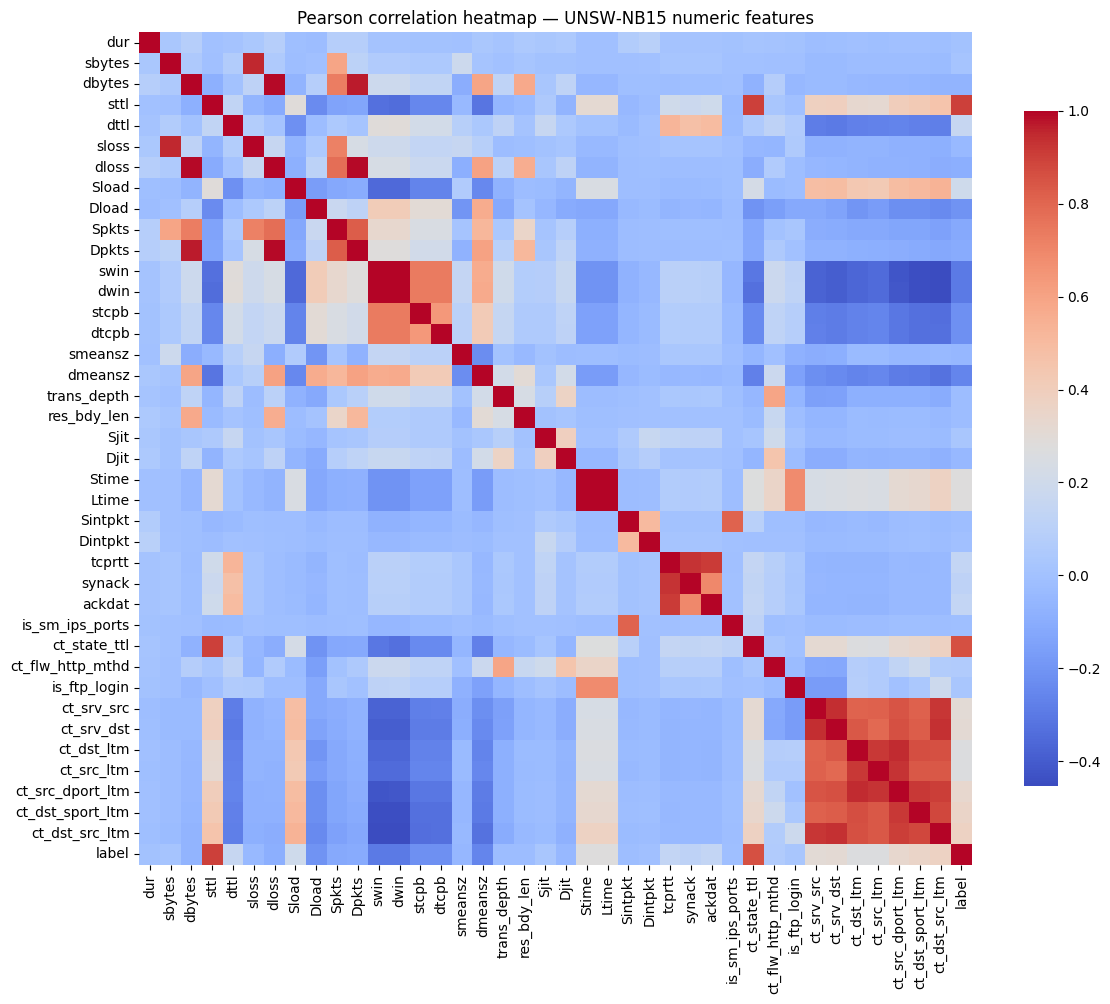}
  \caption{Pearson correlation heatmap for the numeric feature space}
  \Description{Pearson correlation heatmap for the numeric feature space, visually identifying the clusters that are described in the section Dataset Overview.}
  \label{fig:pearson}
\end{figure}

As expected from the logical feature groups, the correlation analysis also reveals distinct correlation clusters:

\begin{itemize}
  \item \textbf{Flow-based cluster:} \texttt{sbytes}, \texttt{dbytes}, \texttt{spkts}, \texttt{dpkts}, \texttt{sload}, \texttt{dload}, and \texttt{rate} have pairwise correlations exceeding 0.85. This is in line with the expected logical redundancy of byte and packet count features.
  
  \item \textbf{Timing cluster:} \texttt{tcprtt}, \texttt{synack}, \texttt{ackdat}, \texttt{sinpkt}, and \texttt{dinpkt} clearly demonstrate the functional relationships in round-trip and inter-arrival timing.
  
  \item \textbf{Contextual cluster:} features starting with \texttt{ct\_} such as \texttt{ct\_srv\_src}, \texttt{ct\_srv\_dst}, and \texttt{ct\_state\_ttl} are functionally correlated in their tracking of host-level and service-level state transitions.
\end{itemize}

These 15 out of 39 features (or 38\% of the feature space), form 3 clusters which will negatively impact downstream modelling and explainability.

\subsubsection{Variance Inflation Factor}
\label{sec:VIF}
The VIF is computed to further investigate multicolliniearity on a representative sample of 5,000 records for computational tractability. Table~\ref{tab:topvif} reports the to 10 features by VIF in decending order along with the observed interpretation. It is critical to highlight that some of these features were \textbf{not previously identified in the correlation analysis} phase and do not belong to any of the derived clusters. 

\begin{table}[H]
\centering
\caption{Top 10 features by Variance Inflation Factor.}
\label{tab:topvif}
\begin{tabular}{lcp{7cm}}
\toprule
\textbf{Feature} & \textbf{VIF} & \textbf{Interpretation} \\
\midrule
\texttt{is\_ftp\_login} & $\infty$ & Binary indicator perfectly determined by other FTP-related features \\
\texttt{ct\_ftp\_cmd}   & $\infty$ & Redundant with \texttt{is\_ftp\_login} \\
\texttt{tcprtt}         & $\infty$ & Perfectly predictable from \texttt{synack} and \texttt{ackdat} \\
\texttt{ackdat}         & $\sim 9 \times 10^{15}$ & Functionally redundant with \texttt{synack} \\
\texttt{synack}         & $\sim 9 \times 10^{15}$ & Functionally redundant with \texttt{ackdat} \\
\texttt{dloss}          & $5.0 \times 10^3$ & Highly correlated with flow packet counts \\
\texttt{dpkts}          & $3.5 \times 10^3$ & Redundant with \texttt{spkts} and \texttt{sbytes} \\
\texttt{dbytes}         & $8.4 \times 10^2$ & Redundant with \texttt{sbytes}, \texttt{spkts} \\
\texttt{sloss}          & $4.9 \times 10^2$ & Flow loss highly tied to packet/byte counts \\
\texttt{sbytes}         & $4.5 \times 10^2$ & Strongly correlated with \texttt{dbytes}, \texttt{spkts} \\
\bottomrule
\end{tabular}
\end{table}

These top 10 features exhibit extreme multicollinearity. The VIF values are orders of magnitude beyond standard or expected thresholds. Some are expected results based on the logical grouping and the correlation analysis.
For exaple, the TCP timing triplet of \texttt{tcprtt}, \texttt{synack}, \texttt{ackdat} was earlier identified during both grouping analyses. 
This triplet is a perfectly collinear group. 
Another common suspect would be the byte/packet counts, which forms another collinear group. 
However, a perfectly collinear group that was not previously picked up during correlation analysis was the FTP-related features
\texttt{is\_ftp\_login} and \texttt{ct\_ftp\_cmd}.

The results of this analysis can become inputs for SEM to replace such clusters with meaningful latent features. For example, the features \texttt{sbytes}, \texttt{dbytes}, \texttt{spkts}, and \texttt{dpkts} can be understood as multiple measurements of a latent construct ``flow volume''. Similarly, the triplet \texttt{tcprtt}, \texttt{synack}, and \texttt{ackdat} can be viewed as indicators of a latent ``round-trip time'' construct. 

The output of such an SEM approach however, would again be subject to VIF computation to verify that the latent feature is not now collinear with others. We propose this as an iterative process until the dataset is truly formed of independent predictors.

The results of the multicollinearity audit demonstrate the inherent redundancy of the UNSW-NB15. Even though some modelling approaches, such as tree-based ensembles and deep learning modes are able to tolerate correlated inputs this outcome significantly undermines identifiability, explainability and reproducibility. For the research community, this translates in unstable regression coefficients, unreliable SHAP and LIME attributions, and unreliable feature importance reported in prior IDS papers without full collinearity controls.

These results provide a validation of Theorem~\ref{math:theorim} and further motivation for the use of the proposed \textit{Explainability Fragility Score} to quantify this instability in explainable AI applied to IDS.

\section{Empirical Explainability Fragility Analysis}

To provide a metric that reliably describes the extent of instability in explainable AI in IDS and to formally assess the consequences of multicollinearity, this work proposes evaluating feature attribution stability using the Explainability Fragility Score presented in Section~\ref{sec:fragility}. This section presents the analysis of empirical experiment results over a comprehensive list of representative AI/ML models prevalent in modern IDS research. The unprocessed results are presented in Appendix B. 

\subsection{Experimental Setup}

The aim of the study is to evaluate both performance and explainability as impacted by existing multicollinearity in benchmark datasets. Hence, the study provides a comparative analysis of a control and a hypothesis scenario using UNSW-NB15:

\begin{itemize}
  \item \textbf{Control: Full feature set} which represents one of the standard approaches in prior IDS literature.
  \item \textbf{Hypothesis: Reduced feature set} whereby multicollinearity is addressed with the removal of high-VIF features ($\text{VIF} > 10$) and high-correlation features ($|\rho| > 0.85$). VIF values were as reported in the dataset audit in Section \ref{sec:VIF}.
\end{itemize}

A set of representative classifiers is selected to cover as wide a spectrum of AI/ML IDS approaches as possible. This includes Logistic Regression (LR), XGBoost, Random Forest,  Multi-Layer Perceptron Neural Network (MLP NN), Convolutional Neural Network (CNN), Long Short-Term Memory network (LSTM) as a representative of Recurrent NNs, and a Radial Basis Function (RBF) kernel Support Vector Machine (SVM) suited for non-linearly-separable data. The LR approach is the most illustrative case for coefficient instability; however, all other cases are potentially affected. XGBoost represents an ensemble-based baseline often utilised in IDS research due to the performance being unaffected by collinearity; however, it can potentially illustrate the effect of feature attributions. The selected Neural Network approaches are among the most prevalent AI methods with robust performance reported in modern studies. 

To enable such a wide comparison and to ensure reproducibility of the results, 10 runs were executed for both training and testing. SHAP values were computed across multiple bootstrap resamples, where 10 runs were performed on 10,000 records each. Fragility Scores were calculated for each feature in both the Control and the Hypothesis set each time. For each model, a ranked list of Fragility Scores was generated at the end of the experiment, demonstrating the most unstable attributions.

The following metrics were used for evaluation: 
\begin{itemize}
    \item \textbf{Predictive performance}: Mean values for  Accuracy, Precision, Recall, F1 score, ROC AUC;
    \item \textbf{Explainability}: Fragility Scores for each feature, Kendal’s $\tau$  calculated for comparison of feature importance stability across the top 20 and top 50 features.
\end{itemize}

The metrics for each model were recorded before and after multicollinearity reduction using VIF-based feature pruning, as were the fragility scores. The most fragile features for each model are reported in Appendix B before and after dataset treatment.   

\subsection{Predictive Performance}

Table~\ref{tab:perform} reports baseline (control) and comparative (hypothesis) mean predictive metrics on the test set. The control and hypothesis results were compared with a paired t-test with 9 degrees of freedom. The resulting p-values for Accuracy, Precision, Recall, F1-Score and ROC AUC never exceeded the threshold for statistical significance ($p-value < 0.05$). The percentage drop in mean performance is presented in Table~\ref{tab:perdrop}.

\begin{table}[H]
\centering
\caption{Predictive performance metrics under collinearity-controlled ($M_h$) and full ($M_c$) feature sets; where $M$ is the model's name. Values are reported as percentages for all metrics except ROC AUC, which is reported as a proportion.}
\label{tab:perform}
\begin{tabular}{lccccc}
\toprule
\textbf{Model} & \textbf{Accuracy} & \textbf{Precision} & \textbf{Recall} & \textbf{F1-score} & \textbf{ROC AUC} \\
\midrule
LR$_c$ & 80.7\% & 78.8\% & 95.6\% & 86.4\% & 0.750 \\
LR$_h$ & 78.9\% & 77.8\% & 93.8\% & 85.1\% & 0.732 \\
\hline
XGBoost$_c$ & 94.4\% & 96.1\% & 95.0\% & 95.6\% & 0.94 \\
XGBoost$_h$ & 93.0\% & 94.4\% & 94.6\% & 94.5\% & 0.92 \\
\hline
Rand. Forest$_c$ & 94.7\% & 96.0\% & 95.6\% & 95.8\% & 0.943 \\
Rand. Forest$_h$ & 92.5\% & 94.5\% & 93.8\% & 94.1\% & 0.920 \\
\hline
MLP NN$_c$ & 92.8\% & 94.3\% & 94.4\% & 94.4\% & 0.922 \\
MLP NN$_h$ & 88.8\% & 90.3\% & 92.4\% & 91.3\% & 0.874 \\
\hline
CNN$_c$ & 92.9\% & 94.2\% & 94.9\% & 94.5\% & 0.922 \\
CNN$_h$ & 86.2\%  & 87.9\% & 90.8\% & 89.3\% & 0.843 \\
\hline
LSTMs$_c$ & 94.9\% & 96.9\% & 95.0\% & 95.9\% & 0.991 \\
LSTMs$_h$ & 94.7\% & 96.1\% & 95.6\% & 95.9\% & 0.991 \\
\hline
RBF SVM$_c$ & 89.7\% & 86.6\% & 99.2\% & 92.5\% & 0.972 \\
RBF SVM$_h$ & 89.7\% & 86.6\% & 99.2\% & 92.5\% & 0.972 \\
\bottomrule
\end{tabular}
\end{table}

\begin{table}[H]
\centering
\caption{Percentage drop in performance metrics between collinearity-controlled and full feature sets (negative numbers indicate rise instead).}
\begin{tabular}{lccccc}
\toprule
\textbf{Model} & \textbf{Accuracy} & \textbf{Precision} & \textbf{Recall} & \textbf{F1-score} & \textbf{ROC AUC} \\
\midrule
LR & 1.8\% & 1.0\% & 1.8\% & 1.3\% & 1.8\% \\
XGBoost & 1.4\% & 1.7\% & 0.4\% & 1.1\% & 2.0\% \\
Rand. Forest & 2.2\% & 1.5\% & 1.8\% & 1.7\% & 2.3\% \\
MLP NN & 4.0\% & 4.0\% & 2.0\% & 3.1\% & 4.8\% \\
CNN & 6.8\% & 6.3\% & 4.0\% & 5.2\% & 7.9\% \\
LSTMs & 0.02\% & 0.08\% & -0.6\% & 0.0\% & 0.0\% \\
RBF SVM & 0.0\% & 0.0\% & 0.0\% & 0.0\% & 0.0\% \\
\bottomrule
\end{tabular}
\label{tab:perdrop}
\end{table}

Thus, predictive performance remains largely stable even after feature reduction. This demonstrates that high-VIF features are redundant even though there is a slight reduction across all metrics for all models. CNNs have a much higher percentage drop, but this is largely due to the larger standard deviation in performance metrics between experiment runs.  

\subsection{Explainability Stability}

The impact on explainability is much more profound, however. Table~\ref{tab:fscore} reports the most often encountered features by Fragility Score across all models. Detailed Fragility Scores are reported in Appendix~B for each of the models, respectively.

\begin{table}[H]
\centering
\caption{The 5 features most often encountered in the top-10 featured rankings by Fragility Score for all models.}
\label{tab:fscore}
\begin{tabular}{lcp{7cm}}
\toprule
\textbf{Feature} & \textbf{Fragility Score} & \textbf{Interpretation} \\
\midrule
\texttt{sttl} & $\gg 1$     & Collinear with \texttt{tcprtt} \\
\texttt{synack} & $\gg 1$     & Swaps attribution weight with \texttt{ackdat} \\
\texttt{sbytes} & $\gg 1$     & Redundant with \texttt{dbytes}, \texttt{spkts} \\
\texttt{dbytes} & $\gg 1$     & Redundant with \texttt{sbytes} \\
\texttt{dload}  & $> 1$       & Derived from packet counts \\
\texttt{sloss}  & $> 1$       & Derived from packet counts \\
\bottomrule
\end{tabular}
\end{table}

From the analysis of the Explainability Fragility Scores presented in Appendix B, several observations can be made: 

\begin{itemize}
  \item Perfectly collinear triplets (e.g., \texttt{tcprtt}, \texttt{synack}, \texttt{ackdat}) yield highly unstable SHAP values, with attribution oscillating arbitrarily among them across runs. At least one of these features is present in the rankings of high Fragility Scores for each of the models. 
  
  \item Redundant flow features (bytes and packets) share similar fragility, as models can substitute any of them without altering predictions. One manifestation of these redundant features is always present in the top fragile feature rankings with a high fragility score. 
  
  \item A high Fragility Score is not directly related to the feature with the highest VIF. Thus the Fragility Score is a manifestation of the symptom, not the root cause. 

\end{itemize}

It is further observed that Fragility Scores reduce significantly for the Hypothesis dataset. In the control set, Fragility Scores can be orders of magnitude larger than the Hypothesis set, especially for some of the highly collinear features. While features that are relatively stable often have fragility scores lower than $1$. Moreover, some models are better suited for collinearity, and this is demonstrated with low fragility scores already in the control set. However, even for those cases, Fragility Scores significantly reduce in the Hypothesis set.

To formally assess the impact of VIF-based feature pruning, Table~\ref{tab:kendal} presents the calculated Kendall's $\tau$ rank correlation for SHAP feature importance stability in all tested models for both the control and the hypothesis data sets. Higher values indicate more stable feature rankings, while lower values indicate low stability and thus fragile explanations. Kendall's $\tau$ values closer to $0$ demonstrate that features (such as \texttt{sbytes}, \texttt{dbytes}, \texttt{spkts}, \texttt{tcprtt}, \texttt{ackdat}) fluctuate in position, with attribution mass oscillating. Although Kendall's $\tau$ values closer to $1$ indicate that rankings are stabilised, and the features remaining after pruning demonstrate stable SHAP values.

\begin{table}[H]
\centering
\caption{Kendall's $\tau$ calculated across the top-50 important features for each of the tested models for both the full (control $_c$) and VIF-based pruned (hypothesis $_h$) datasets. Higher values imply higher stability.}
\label{tab:kendal}
\begin{tabular}{lcc}
\toprule
\textbf{Model} & \textbf{$\tau_c$} & \textbf{$\tau_h$} \\
\midrule
LR & 0.37 & 0.91 \\
XGBoost & 0.44 & 0.68 \\
Rand. Forest & 0.11 & 0.51  \\
MLP NN & 0.63 & 0.73 \\
CNN & 0.04 & 0.34 \\
LSTMs & 0.14 & 0.17 \\
RBF SVM & 0.05 & 0.08 \\
\bottomrule
\end{tabular}
\end{table}

The results highlight a significant difference between the often-reported predictive performance metrics, and explainability metrics. This behaviour explains why multicollinearity has been overlooked in the IDS benchmark datasets.
While performance is not significantly impacted by multicollinearity, explainability metrics are highly sensitive. If datasets are used without multicollinearity controls, feature attributions are unstable and experimental results are not reproducible.

Further, these results validate Theorem~\ref{math:theorim} and underline the practical value of our proposed \textit{Fragility Score}.

\section{Mitigating Explainability Fragility Methodology}
\label{sec:mitigation}

Earlier sections established the generalised effect of multicollinearity-induced fragility beyond the choice of modelling approach. However, it is not always possible or suitable to apply appropriate filtering for multicollinearity directly at the dataset level. This section aims to propose two novel methods for explainability fragility mitigation at the modelling step of AI/ML pipelines. The two approaches are not interdependent and are designed to be applied either in already trained models or while training. 

\subsection{Post-hoc CAA-Filtering method}

For models that have been already trained, it is not easy to evaluate datasets, apply VIF-related feature filtering and retrain, especially for highly dimensional datasets and complex models where training requires significant resources. In this case it is possible to mitigate fragility post-hoc through the proposed Collinearity-Aware Attribution (CAA) Filter presented in Algorithm~\ref {alg:caa_filter}. 

The CAA-Filter is fundamentally an approach to grouping interchangeable SHAP explanation post-hoc to avoid misleading explanations. 
The algorithm groups highly correlated (e.g., Pearson $|\rho| > 0.85$ or shared $\mathrm{VIF} > 10$) features and then aggregates SHAP values for each group. The aggregation method is selected by the user and can be either (i) the max absolute SHAP for the dominant feature, (ii) the mean SHAP to represent the distributed contribution, or (iii) the sum to represent the total group effect. The algorithm then replaces individual SHAP values with the group-level calculation. Hence, reducing explanation noise, eliminating fragility effects, and improving interpretability and reproducibility. 

This approach can be retrofitted to existing models and even existing results of previous explanation runs. As a result, and by definition, it does not affect model performance metrics. The limitation it inherently introduces is the fact that explanations cannot be uniquely attributed to a single feature, but rather a cluster of features. Additionally, it remains possible that some instability is present, especially when the max absolute SHAP filtering approach is selected. 

\begin{algorithm}
\caption{Collinearity-Aware Attribution Filter (CAA-Filter)}
\label{alg:caa_filter}
\begin{algorithmic}[1]
\REQUIRE SHAP matrix $S \in \mathbb{R}^{n \times d}$, feature matrix $X \in \mathbb{R}^{n \times d}$,
correlation threshold $\rho_{\text{thresh}}$, aggregation method $\mathcal{A} \in \{\text{mean}, \text{max}, \text{sum}\}$
\ENSURE Filtered SHAP matrix $S_{\text{filtered}}$, feature cluster mapping $\mathcal{C}$

\STATE Compute absolute correlation matrix $R \leftarrow |\mathrm{corr}(X)|$
\STATE Initialize empty cluster set $\mathcal{C} \leftarrow \emptyset$
\STATE Initialize unassigned feature set $\mathcal{U} \leftarrow \{1, \dots, d\}$

\WHILE{$\mathcal{U} \neq \emptyset$}
    \STATE Select feature $i \in \mathcal{U}$
    \STATE Initialize cluster $G \leftarrow \{i\}$
    \FORALL{$j \in \mathcal{U},\; j > i$}
        \IF{$R_{ij} > \rho_{\text{thresh}}$}
            \STATE $G \leftarrow G \cup \{j\}$
        \ENDIF
    \ENDFOR
    \STATE $\mathcal{C} \leftarrow \mathcal{C} \cup \{G\}$
    \STATE $\mathcal{U} \leftarrow \mathcal{U} \setminus G$
\ENDWHILE

\STATE Initialize $S_{\text{filtered}} \leftarrow [\;]$

\FORALL{clusters $G \in \mathcal{C}$}
    \STATE Extract submatrix $S_G \leftarrow S[:, G]$
    \IF{$\mathcal{A} = \text{mean}$}
        \STATE $s_G \leftarrow \mathrm{mean}(S_G,\; \text{axis}=1)$
    \ELSIF{$\mathcal{A} = \text{max}$}
        \STATE $s_G \leftarrow \max(|S_G|,\; \text{axis}=1)\cdot
        \mathrm{sign}\!\left(S_G[\arg\max |S_G|]\right)$
    \ELSIF{$\mathcal{A} = \text{sum}$}
        \STATE $s_G \leftarrow \sum S_G,\; \text{axis}=1$
    \ENDIF
    \STATE Append $s_G$ to $S_{\text{filtered}}$
\ENDFOR

\RETURN $S_{\text{filtered}}, \mathcal{C}$
\end{algorithmic}
\end{algorithm}

\subsection{Training-time SHARP regularisation method}

To overcome the limitations of the CAA-Filter, a formal method for the introduction of fragility-aware training is proposed. The SHAP Stability Regulariser (SHARP) is a novel method to penalise fragile attributions during model training (Algorithm~\ref{alg:sharp}). This leads to models that are producing stable attributions even if the dataset exhibits multicollinearity. SHARP can be interpreted as an attribution-space analogue of classical regularisation (L1/L2 penalties), which constrain parameter magnitude, while SHARP constrains attribution variance under resampling perturbations. This introduces a second-order geometric constraint on the model’s explanation manifold, transforming interpretability stability into a first-class optimisation objective. A penalty can be added to the loss function, capturing attribution instability per batch as presented in Equation~8. This penalty is based on the Fragility Score presented in Section~\ref{sec:fragility}, where $\lambda$ is a tunable hyperparameter ranging from $0.1$ to $1.0$.

\begin{equation}
\mathcal{L}_{\text{total}}
=
\mathcal{L}_{\text{base}}
+
\lambda
\cdot
\frac{1}{|B|}
\sum_{x_i \in B}
\text{Fragility}(x_i)
\end{equation}

\noindent\textbf{Remark}. The SHARP objective transforms explainability stability from a post-hoc diagnostic metric into a training-time optimisation target. As $\lambda$ increases, the optimisation prioritises attribution consistency across bootstrap perturbations, introducing a stability-performance trade-off analogous to classical regularisation schemes.

The added penalty in the loss function enforces the selection of representations yielding consistent SHAP explanations. As the fragility score can be applied to other explainability methods (e.g., LIME), it is further possible to generalise this approach to any attribution-based explanation methodology.

The application of SHARP does not disrupt the usual model training. The method adds the computation of SHAP over a test slice (batch) every $n$ epochs, thereby computing and adding the fragility-score penalty to the updated weights. Hence, this method further improves robustness while maintaining predictive performance.

\begin{algorithm}
\caption{SHAP Stability Regularizer (SHARP)}
\label{alg:sharp}
\begin{algorithmic}[1]
\REQUIRE Training data $(X_{\text{train}}, y_{\text{train}})$, model $f(\cdot;\theta)$,
learning rate $\eta$, regularization weight $\lambda$, SHAP estimator $\mathrm{Explainer}(\cdot)$,
batch size $B$, fragility interval $k$
\ENSURE Trained model parameters $\theta$

\STATE Initialize model parameters $\theta$
\FOR{epoch $= 1$ to $N_{\text{epochs}}$}
    \FORALL{mini-batches $B = \{x_i\} \subset X_{\text{train}}$}
        \STATE Compute predictions $\hat{y} \leftarrow f(B; \theta)$
        \STATE Compute base loss $L_{\text{base}} \leftarrow \mathcal{L}(y_B, \hat{y})$
        
        \IF{$\text{epoch} \bmod k = 0$}
            \STATE Sample bootstrap batch $B' \leftarrow \mathrm{Resample}(B)$
            \STATE Fit temporary parameters $\theta' \leftarrow \mathrm{Train}(f, B', y_B)$
            \STATE Compute SHAP values $\{\phi_i\} \leftarrow \mathrm{Explainer}(f(\cdot;\theta), B)$
            \STATE Compute SHAP values $\{\phi_i'\} \leftarrow \mathrm{Explainer}(f(\cdot;\theta'), B)$
            \STATE Compute batch fragility penalty
            \[
                \mathcal{F}_B
                \leftarrow
                \frac{1}{|B|}
                \sum_{x_i \in B}
                \text{Fragility}(x_i)
            \]
        \ELSE
            \STATE $\mathcal{F}_B \leftarrow 0$
        \ENDIF
        
        \STATE Compute total loss
        \[
            \mathcal{L}_{\text{total}}
            \leftarrow
            \mathcal{L}_{\text{base}}
            +
            \lambda \cdot \mathcal{F}_B
        \]
        \STATE Update parameters $\theta \leftarrow \theta - \eta \nabla_{\theta} \mathcal{L}_{\text{total}}$
    \ENDFOR
\ENDFOR

\RETURN $\theta$
\end{algorithmic}
\end{algorithm}

The approach is model-agnostic. It is directly compatible with linear regressors, tree models and is applicable to both differentiatable and non-differentiatable models. It does not rely on VIF dataset pruning or feature dimentionality reduction. The SHARP method operationalises the fragility score and theoretical grounding presented in earlier sections to establish explainability stability even in the presence of multicollinear datasets. Thus this method moves beyond benchmarking to introduce a new learning objective for the AI/ML model. As this approach can be applied to both convex regularisation (e.g. LR) and differentiable proxy regularisation (e.g. MLP NN) it is versatile and widely applicable. Especially for convex regularisation the results would be deterministic and converging. For differentiable proxy regularisation the First-order Taylor attribution can be utilised to provide local explanation behaviour representations that are fully differentiable and is a well-established approach for gradient-based attribution during training \cite{rawal2025evaluating}. 

Unlike VIF-based feature pruning, SHARP does not alter the dataset or remove predictors, and unlike CAA-Filtering it does not group attributions. Instead, it enforces stability constraints directly in parameter space, enabling attribution consistency even in structurally redundant datasets.

\subsection{Experimental Evaluation}

The proposed CAA-Filter and SHARP regularisation methods are, by definition, model-agnostic. Both approaches are suitable for any supervised modelling approach attempting to produce explainability outputs using SHAP or LIME inspired approached based on attributions. This section aims to prove this argument by providing experimental evaluation on representative models. 

For CAA-Filter, XGBoost is selected as a representative higher-order modelling approach, which showed a significant improvement when VIF-based pruning was applied to the dataset. XGBoost produces clean SHAP values with standard \texttt{shap.TreeExplainer} or \texttt{shap.LinearExplainer}, similarly to LR or RF. Usually, for these methods, feature importance is interpretable and sparse. Hence, attribution clusters are suitable, while speed of training and clustering can be additional benefits. As mentioned earlier, predictive performance results are not affected by this method, so only explainability results will be reported. Two instability measures are provided, both using Kendall's $\tau$ to evaluate the ranking order (in)stability. The first evaluates the instability of feature importance for the top 50 ranked features. The second evaluates the fragility instability based on the Fragility Score ranking of the top 20 most fragile features. 

For SHARP, both LR and MLP NN approaches are selected as representatives with significant VIF-pruning benefits and relative complexity. Both methods have relatively simple loss functions or well-established proxies, which result in quicker SHARP calculations of the regularisation term while they offer consistent SHAP attributions. Furthermore, $\lambda$-ablation instability is measured to establish the monotonic behaviour of the hyperparameter. 

\subsubsection{CAA-Filter applied to XGBoost}

This experiment followed the same setup as in Section~\ref{sec:results}. The exact same code was executed for the control and hypothesis cases to provide reproducible results. After the XGBoost model is trained the CAA-filter is applied, and Kendall's $\tau$ is calculated over the Top-50 features ranked by feature importance. Table~\ref{tab:caa} reports the average of 10 repetitions. The resulting improvement in feature importance ranking is consistently better than XGBoost trained on the control dataset, but lower than the model trained on the hypothesis set. Similarly, the median Kendall's $tau$ of the top-50 important features improved by approximately by 0.2 points when CAA-Filtering was applied compared to the Control dataset, while it improved by a further 0.2 points when the Hypothesis dataset was used. This conforms with the expectation of dataset cleaning being the ultimate solution. However, CAA-Filtering already provides a significant improvement of the order of 50\%. Hence, making it a viable retrofitting method when retraining is not an option. 

\begin{table}[H]
\centering
\caption{Comparative analysis results for 10 repetitions of training an XGBoost model with the full dataset (control), the VIF-based pruned (hypothesis) dataset, and applying the CAA-Filtering method to the full dataset (CAA-Filtering).}
\label{tab:caa}
\begin{tabular}{lccc}
\toprule
\textbf{Metric} & \textbf{$\tau_c$} & \textbf{$\tau_{CAA}$} & \textbf{$\tau_h$} \\
\midrule
Kendall's $\tau$ of top-50 important features & 0.44 & 0.53 & 0.68 \\
Kendall's $\tau$ of top-20 fragile features & 0.20 & -- & 0.07 \\
\bottomrule
\end{tabular}
\end{table}

Additionally, the table presents the results for a secondary Kendall's $\tau$ calculation over the Top-20 features ranked by fragility score. This metric is also averaged over the 10 repetitions, and it is counter-analogous to the stability of the model, meaning that lower values are indicative of improved stability. The results for the CAA-filter cannot be calculated for this secondary metric, as the grouping of the features would make the comparison inconsistent. However, the hypothesis set produces more stable explanations with lower fragility scores. 

\subsubsection{SHARP applied to LR}

For reproducibility, the setup was as described in Section~\ref{sec:results}.
For the evaluation of explainability and performance metrics, the hyperparameter $\lambda$ was initially fixed at $0.5$ across 10 repetitions. The implementation of the model was based on \texttt{PyTorch} and \texttt{numpy.Linear} with the optimiser \texttt{Adam} and \texttt{BCELoss} running over 10 epochs with batch size 64 and applying the \texttt{shap.Explainer} to each batch to calculate the fragility penalty. 

The results are presented in Tables~\ref{tab:sharp_lr_explainability} and \ref{tab:sharp_lr_performance} respectively. As expected, Kendall's $\tau$ reaches the highest possible value when SHARP is used, as it is central to the optimisation goal. However, a surprising result was that performance metrics for LR were also optimised with SHARP and indeed exceeded the results of the VIF pruning approach. The results demonstrate perfect interpretation rank stability across bootstraps, elimination of explanation fragility while achieving stronger generalisation than VIF pruning with more interpretable and also more accurate modelling.
Not only is there no accuracy sacrifice to achieve better explainability, but infact there is a performance boost for every single performance metric. 

\begin{table}[H]
\centering
\caption{Comparative analysis results for 10 repetitions of training an LR model with the full dataset ($_c$), the VIF-based pruned ($_h$) dataset, and applying the SHARP method to the full dataset ($_s$).}
\label{tab:sharp_lr_explainability}
\begin{tabular}{lccc}
\toprule
\textbf{Metric} & \textbf{$\tau_c$} & \textbf{$\tau_s$} & \textbf{$\tau_h$} \\
\midrule
Kendall's $\tau$ of top-50 important features & 0.50 & \textbf{0.62} & 0.84 \\
Kendall's $\tau$ of top-20 fragile features & 0.90 & \textbf{0.4} & 0.30 \\
\bottomrule
\end{tabular}
\end{table}

\begin{table}[H]
\centering
\caption{Averaged performance results for 10 repetitions of training an LR model with the full dataset ($_c$), the VIF-based pruned ($_h$) dataset, and applying the SHARP method to the full dataset ($_s$).}
\label{tab:sharp_lr_performance}
\begin{tabular}{lccccc}
\toprule
\textbf{Model} & \textbf{Accuracy} & \textbf{Precision} & \textbf{Recall} & \textbf{F1-score} & \textbf{ROC AUC} \\
\midrule
LR$_c$ & 80.7\% & 78.8\% & 95.6\% & 86.4\% & 0.750 \\
LR$_s$ & \textbf{89.8}\% & \textbf{87.6}\% & \textbf{97.9}\% & \textbf{92.4}\% & \textbf{0.976} \\
LR$_h$ & 78.9\% & 77.8\% & 93.8\% & 85.1\% & 0.732 \\
\bottomrule
\end{tabular}
\end{table}

To formally validate the controllability of explainability stability, $\lambda$-ablation evaluation was performed for $\lambda \in \{0, 0.01, 0.1, 1.0, 10.0\}$. When $\lambda = 0$, SHARP reduces to standard empirical risk minimisation, ensuring backward compatibility with existing training pipelines. Results are presented in Fig.~\ref{fig:lr_abl}. From the results, it can be observed that stability increases smoothly with $\lambda$ while fragility decreases in a similar manner. This remains linear for $\lambda \in {0,1}$, however a more dramatic change happens for values higher than $1$. Along the same trend, performance degrades only slightly as $\lambda \to 1$ and decays exponentially for higher values. Results demonstrate a monotonic relationship between $\lambda$ and attribution stability, confirming that SHARP enables deterministic regulation of explainability robustness. Notably, SHARP does not impose a stability-performance trade-off in the linear case; instead, it improves both predictive generalisation and attribution robustness, suggesting that fragility regularisation may implicitly mitigate overfitting under multicollinearity.

\begin{figure}
    \centering
    \includegraphics[width=\linewidth]{sharp_lambda_ablation.png}
    \caption{$\lambda$-Ablation results for values from 0 to 10 increasing by powers of 10 for the LR model}
    \label{fig:lr_abl}
\end{figure}

\subsubsection{SHARP applied to MLP NN}

For the evaluation of explainability and performance metrics, the hyperparameter $\lambda$ was initially fixed to $0.01$ for 10 repetitions, following the setup described in Section~\ref{sec:results}. The value of $\lambda$ was set to 0.5. In this case, the \texttt{shap.KernelExplainer} was used. The results are presented in Tables~\ref{tab:sharp_mlp_explainability} and \ref{tab:sharp_mlp_performance} respectively. To our knowledge, this constitutes the first application of training-time explainability regularisation in IDS benchmarking, demonstrating that even non-linear neural architectures can be constrained to produce stable attributions under multicollinear feature regimes.

The experimental stability results demonstrate once more that the dataset VIF-based pruning provides the highest or ideal stability. However, the results for SHARP are much closer to the ideal than the respective CAA-Filtering results. This verifies that the method provides improvements compared to the control case and can approximate the behaviour of dataset pruning without sacrificing any signal. At the same time, performance is largely unchanged, with marginal improvement, which also mitigates any performance loss that VIF-based pruning reports. Thus, SHARP is the best approach to provide high performance and explainability stability. 

\begin{table}[H]
\centering
\caption{Comparative analysis results for 10 repetitions of training an MLP NN model with the full dataset ($_c$), the VIF-based pruned ($_h$) dataset, and applying the SHARP method to the full dataset ($_s$).}
\label{tab:sharp_mlp_explainability}
\begin{tabular}{lccc}
\toprule
\textbf{Metric} & \textbf{$\tau_c$} & \textbf{$\tau_s$} & \textbf{$\tau_h$} \\
\midrule
Kendall's $\tau$ of top-50 important features & 0.82 & \textbf{0.89} & 0.85 \\
Kendall's $\tau$ of top-20 fragile features & 0.45 & \textbf{0.36} & 0.35 \\
\bottomrule
\end{tabular}
\end{table}

\begin{table}[H]
\centering
\caption{Averaged performance results for 10 repetitions of training an MLP NN model with the full dataset ($_c$), the VIF-based pruned ($_h$) dataset, and applying the SHARP method to the full dataset ($_s$).}
\label{tab:sharp_mlp_performance}
\begin{tabular}{lccccc}
\toprule
\textbf{Model} & \textbf{Accuracy} & \textbf{Precision} & \textbf{Recall} & \textbf{F1-score} & \textbf{ROC AUC} \\
\midrule
MLP NN$_c$ & 92.8\% & 94.3\% & 94.4\% & 94.4\% & 0.922 \\
MLP NN$_s$ & \textbf{92.8}\% & \textbf{95.0}\% & 93.5\% & 94.2\% & \textbf{0.985} \\
MLP NN$_h$ & 88.8\% & 90.3\% & 92.4\% & 91.3\% & 0.874 \\
\bottomrule
\end{tabular}
\end{table}

Similar to the case of LR, $\lambda$-ablation evaluation was performed. In this case, the values tested were $\lambda \in \{0, 0.01, 0.1, 1.0, 10.0\}$. Results are presented in Fig.~\ref{fig:mlp_abl}. Especially in the case of MLP NNs, multicollinearity directly destabilises gradient flow, and SHAP instability propagates through non-linear feature clustering. As a result, this experiment supports this work's strongest contribution: the model-agnostic, theoretically established fragility penalty as a monotonic and deterministic training-time contributor to explainability stability. Additionally, SHARP regularisation improves explanation stability while preserving predictive performance. In the case of MLP NN, the performance even picks when SHARP is applied compared to either the control or the hypothesis dataset. More generally, for moderate $\lambda$ values, the gain in explanation stability is significant compared to any performance degradation. For larger values of $\lambda$ the trade-offs are controllable. In non-linear models, where gradient flow instability amplifies multicollinearity effects, SHARP regularisation provides measurable and controllable stabilisation of SHAP attributions.

\begin{figure}
    \centering
    \includegraphics[width=\linewidth]{sharp_mlp_lambda_ablation.png}
    \caption{$\lambda$-Ablation results for values from 0 to 10 increasing by powers of 10 for the MLP NN model}
    \label{fig:mlp_abl}
\end{figure}

These results demonstrate that fragility regularisation extends beyond convex models as attribution stability improves consistently as $\lambda$ increases for MLP NNs. This further supports the theoretical positioning of the explanation fragility score as a structural regulariser on representation learning, introducing a second-order constraint on the model’s attribution geometry. This extends the classical regularisation, which constrains parameter magnitude, and introduces fragility scores to constrain attribution variance under resampling perturbations, thus establishing explainability stability as a first-class optimisation objective. 

\section{Discussion and Recommendations}
\label{sec:discussion}

\subsection{Turning tides in Explainability Positioning}

Prior explainability research has largely treated instability as a diagnostic artefact revealed post-hoc and often without sufficient reproducibility studies. In contrast, SHARP introduces stability as an explicit optimisation target, allowing practitioners to enforce attribution robustness during training rather than merely auditing it afterwards. This shift reframes explainability from a passive reporting mechanism to an actively controlled property of the model. 

The three mitigation methodologies operate at different layers of the AI pipeline, enabling practitioners to introduce what is feasible for their organisation at various stages of the development process. VIF-pruning operates at the dataset level and requires pre-processing and retraining. CAA-Filtering operates at the explanation layer, fully decoupled from the training process and applicable in mature models with minimal intervention. SHARP operates at the optimisation level with no dataset implications but with retraining requirements with optimised results and some times even better performance results with an impact on computational resources.

\subsection{Reproducibility and Benchmark Validity}

The theoretical analysis and experimental results clearly demonstrate that the effects of multicollinearity exist even in widely adopted benchmark datasets such as UNSW-NB15. Predictive metrics are largely insensitive to its presence. However, the interpretability and reproducibility of XAI methods are significantly compromised.  

The fragility analysis confirms the validity of the Theorem of Multicollinearity-Induced Experimental Fragility. In this work, it is proven both theoretically and empirically that highly multicollinear features produce unstable attributions, which in turn form the basis of explainability claims. 

The resulting implications for IDS research are profound when XAI methods are used for explanations or feature rankings are presented without prior dataset processing to remove multicollinearity beyond simple correlation-based pre-processing. Thus, any unique causal contributions of the top features are unreliable. Hence, the reproducibility of IDS research is materially affected. \textbf{Such claims must be regarded with caution unless accompanied by diagnostic evidence.} 

Most importantly, this emission in dataset auditing is a significant gap in the established protocols for benchmark validity. Trustworthy AI requires accountability, and thus auditing of dataset statistical properties becomes a mandatory requirement for compliance and reproducibility. Our work proposes a set of audit metrics which should as minimum include VIF distributions, correlation matrices, and fragility scores as well as class imbalances.

\subsection{Explainability and Regulatory Compliance}

With the growing regulatory body around Trustworthy AI, explainability is a significant concern for both academic and industrial AI deployments. As a result XAI approaches are becoming mainstream. As the regulatory frameworks such EU AI Act and others are particularly strict for critical applications, explanations and auditability become especially important in the domains of finance, healthcare and cybersecurity. As demonstrated by Theorem~\ref{math:theorim}, the established XAI approaches such as SHAP and LIME are predominantly based in feature attributions and thus unstable under multicollinearity undermining the credibility of interpretability or explainability claims. 

Furthermore, by enabling controllable explanation stability without altering predictive performance, SHARP supports reproducibility, auditability, and robustness requirements outlined in emerging AI governance frameworks.

\subsection{Generalization Beyond UNSW-NB15}

The proposed Theorem and Explainability Fragility Score are generalised and applicable to any method that bases intrpretations or explanations on feature attributions. Thus the work can generalise to any dataset beyond UNSW-NB15 and even beyond the domain of IDS research. Within this domain popular benchmarks include CICIDS2017, Bot-IoT, and TON\_IoT. All of them provide similar features to UNSW-NB15 and as a result share similar structural redundancies. For example, features from flow statistics and protocol attributes are extracted and added to the dataset. Thus, the proposed Fragility Score metric as well as the related Kendall's $\tau$ measurements can be used as a general diagnostic tool applicable to any cybersecurity benchmark dataset. Similarly, highly dimensional datasets with engineered features are prevalent in a variety of domains (e.g., healthcare, finance) where public datasets are widely reused without multicollinearity audits.

\subsection{Recommendations for Practitioners}

Proposed new Theorems or metrics often face the challenge of relevance to applied research and more importantly practical implementations in industrial settings. This section aims to address this challenge by presenting a set of four recommendations for practical integration. The aim is to ensure methodological rigour and trustworthy deployment of models trained on UNSW-NB15 and similar cybersecurity benchmarks.

\subsubsection{Data pipelines}

Data pipelines are well established in current SaaS offerings for AI/ML often focusing on data pre-processing. Thus, including multicollinearity diagnostics such us the calculation of fragility scores can be embedded in such pipelines. It is frequently possible to enable pipeline checks based on reported measures, and as a result, automate the enforcement of multicollinearity-free datasets even in the presence of continuously evolving incoming data and data features. Such approaches can be included in data pipelines with small overhead while ensuring explainability and interpretability for robust and reproducible results.

\subsubsection{Dataset Diagnostics}

\begin{itemize}
  \item \textbf{Report multicollinearity metrics.} In recent years, reporting class imbalance statistics has become standard practice. This reporting should be extended to include VIF and correlation matrices for benchmark datasets.
  
  \item \textbf{Flag high-risk features.} To enable automated control of multicollinearity, highly multicollinear ($\text{VIF} > 10$) or correlated ($|\rho| > 0.85$) features should be automatically flagged as risks.
  
\end{itemize}

\subsubsection{Feature Engineering Practices}

\begin{itemize}
  \item \textbf{Remove or aggregate redundant features.} Flagged features can be automatically reduced or aggregated based on SEM or similar approaches.
  
  \item \textbf{Apply dimensionality reduction.} Several approaches are often present in the SaaS toolkit for AI/ML pipelines, such as PCA or autoencoders. However, their output should again be examined under the dataset diagnostics proposed above and flagged if necessary. 
  
  \item \textbf{Use regularisation.} Similarly, regularisation can achieve a reduction in redundancy with methods such as Lasso or ElasticNet. However, the output should again pass the dataset diagnostics threshold checks.

\end{itemize}

\subsubsection{Explainability and Stability Checks}

\begin{itemize}

  \item \textbf{Compute Fragility Scores.} To ensure the final outcome is identifiable and reproducible, fragility scores should be computed for the trained model before reporting SHAP or LIME feature rankings. Resampling or feature perturbation approaches can be used to iterate. 

  \item \textbf{Apply training-time fragility regularisation (SHARP) or post-hoc CAA-Filtering}. Where retraining is feasible, integrate fragility-aware penalties directly into the loss function to enforce attribution stability prior to deployment. Otherwise, prefer the CAA-Filtering method, which provides flexibile integration without re-training and already improves the stability of explanations.
    
  \item \textbf{Stability as a validity criterion.} If possible, apply a threshold to fragility scores linked to the deployment pipeline. If features with high fragility are mandatory, then, as a minimum measure, high Fragility Score features must not contribute to the interpretation or causality of the prediction.

  \item \textbf{Trustworthy AI compliance.} By introducing continuous monitoring of fragility, it is possible to audit model performance throughout its lifetime, and thus ensure continuous compliance with the EU AI Act. The model explanations can be demonstrably reliable at any point in time.
  
\end{itemize}

\section{Conclusion and Future Work}

This paper presented the theoretical framework for multicolinearity-induced explainability fragility, along with empirical evaluation using the UNSW-NB15 benchmark dataset for AI-based IDS. The case study demonstrates, through a systematic analysis, the severity of multicollinearity in established datasets. The results demonstrate VIF values and fragility scores order of magnitude beyond standard thresholds. Overall model explainability fragility was further highlighted through Kendall's $\tau$ calculations over feature importance ranks while predictive performance remains largely unaffected. Through mathematical proofs, the paper demonstrated the effect of multicollinearity over standard XAI approaches based on attributions, concluding that XAI claims can be unreliable, non-identifiable and thus not reproducible. The work extends the state-of-the-art by contributing (i) a formal theorem to quantify the experimental validity in XAI for IDS research, (ii) an audit of an established benchmark dataset identifying high-risk features, (iii) a formally defined fragility score quantifying attribution instability applicable to a variety of XAI methods, and (iv) reframing explainability from a passive post-hoc interpretation tool to an actively optimised training objective, establishing stability as a measurable and controllable property of AI systems.
 
Together, these findings challenge the validity of prior IDS studies that report ``top features'' or explanation-driven insights. We argue that future IDS benchmarking must report not only predictive performance but also explainability stability metrics, and where instability is detected, apply structured mitigation such as VIF control, CAA-Filtering, or SHARP regularisation. Finally, the paper concludes with a set of actionable recommendations for practitioners for reproducible and trustworthy evaluation. 

\subsection{Future work}

There are several avenues to extend this work further and enable rigorous IDS evaluation supporting explainable, reliable, and compliant solutions. The following comprises a list of possible further research directions: 
\begin{itemize}
  \item Replicate the experimental steps of the multicollinearity audit for further IDS benchmarks such as CICIDS2017, Bot-IoT, and TON\_IoT, thus evaluating the generalisation claims;
  \item Explore other causal inference approaches which could be linked to dataset pre-processing and data pipeline monitoring;
  \item Integrate fragility auditing into open-source benchmarking toolkits, achieving standardisation of reproducibility protocols for IDS research;
  \item Assess the broader impact of multicollinearity on proposed fairness metrics and adversarial robustness;
  \item Combine CAA-Filtering and SHARP at CAA cluster level to identify further potential enhancements; 
  \item Extend SHARP to support online monitoring. 
\end{itemize}

%

\bibliographystyle{ACM-Reference-Format}
\bibliography{sample-base}

\appendix

\section{Theorem Proof}
\label{sec:proof}
\begin{proof}[Theorem derivation]
\textbf{Setting.}
Let $X\in\mathbb{R}^{n\times p}$ be the feature matrix, $y\in\mathbb{R}^n$,
and consider the homoscedastic linear model
$y = X\beta + \varepsilon$, where the mean-noise $\mathbb{E}[\varepsilon]=0$, providing unbiased results, and the variance of error is constant
$\operatorname{Var}(\varepsilon)=\sigma^2 I_n$.
Set $x_i$ as the feature of the $i$th feature column and set $X_{-i}$ for all columns except $x_i$.
In the case of linear models, then we can define the uncentred second moment of mean-centred $x_i$ scaled by $n$ as $S_{ii}=\sum_{k=1}^n x_{ki}^2 = n\,\operatorname{Var}(x_i)$ if and only if $\bar{x_i} = 0$. Considering linear regression, $S_{ii}$ will appear as the diagonal element of the Gram matrix $(X^\top X)$ leading to $S_{ii} = (X^\top X)_{ii}$. Additionally, let
$R_i^2$ as the $R^2$ from regressing $x_i$ on $X_{-i}$.
Finally, let variance inflation factor be $\text{VIF}(x_i)=\frac{1}{1-R_i^2}$.

\medskip
\noindent\textbf{Part (A): Non-identifiability as $\text{VIF}(x_i)\to\infty$.}
If $R_i^2=1$, then $x_i$ lies in the span of $X_{-i}$; there exists $\alpha\in\mathbb{R}^{p-1}$ such that perfect linear predictability exists as
$x_i=X_{-i}\alpha$. This means that the design matrix is rank-deficient as the columns are linearly dependent. Then, there exists a nontrivial non-zero vector $\gamma\neq 0$ which is the null-space of X $X\gamma=x_{i}-X_{-i}\alpha=0$ where: 

\begin{equation} 
\gamma_{j}=
\begin{cases}
1 & \text{if } j = i \\
-\alpha_j & \text{if } j \ne i
\end{cases}
\end{equation}

Thus any $t\in\mathbb{R}$ where by a coefficient vector $\beta$ provides an identical result as the coefficeint vector $\beta^\prime$ defined as:

\begin{equation}
\beta^\prime \;=\; \beta + t \begin{bmatrix} 1 \\ -\alpha \end{bmatrix}_i = \beta + t \gamma
\quad\Longrightarrow\quad
X\beta^\prime \;=\; X\beta + t\,(x_i - X_{-i}\alpha) \;=\; X\beta.
\end{equation}

Hence, the fitted values and resulting predictions and losses are unchanged along a nontrivial subspace of coefficients. As a result, any linear attribution method based on linear coefficients is therefore \emph{non-identifiable}. This applies to some of the most popular methods for computing feature contributions and thus explainability metrics, such as regression coefficients, linear SHAP, and local linear LIME.
As $R_i^2\to 1$ or $\text{VIF}(x_i)\to\infty$, the Gram matrix $X^\top X$ becomes ill-conditioned and the same non-uniqueness emerges in the limit, thus proving the first part of the theorem.

\medskip
\noindent\textbf{Part (B): Variance lower bound scaling with VIF for linear models.}
Let us assume full rank, then, the Ordinary Least Squares (OLS) estimator is $\hat\beta=(X^\top X)^{-1}X^\top y$.
Thus, under Gauss-Markov assumptions set out above, the covariance of the OLS estimator  is:

\begin{equation}
\operatorname{Var}(\hat\beta)=\sigma^2 (X^\top X)^{-1}.
\end{equation}

Let the $i$-th diagonal element of the inverse of the Gram matrix $(X^\top X)^{-1}_{ii}$ measure the sensitivity in estimating the coefficient $\beta_{i}$. Using the well-known Schur complement identity:

\begin{equation}
(X^\top X)^{-1}_{ii}
\;=\; \frac{1}{S_{ii}(1-R_i^2)}
\;=\; \frac{\text{VIF}(x_i)}{S_{ii}}
\;=\; \frac{\text{VIF}(x_i)}{n\,\text{Var}(x_i)}.
\end{equation}

Hence:

\begin{equation}
\text{Var}(\hat\beta_i) \;=\; \sigma^2 (X^\top X)^{-1}_{ii}
\;=\; \frac{\sigma^2}{n\,\text{Var}(x_i)}\,\text{VIF}(x_i).
\end{equation}

Consider the attribution for a \emph{fixed test point} $x^\ast$ under a linear model.
For many linear explanation schemes such as linear SHAP with interventional semantics on linear models,
the $i$-th attribution takes the form $\phi_i(x^\ast)=\hat\beta_i\,(x^\ast_i-\mu_i)$, where $\mu_i$ is a fixed baseline. In the setting presented above this baseline is  $\mathbb{E}[x_i]$.
Since $x^\ast$ is fixed when we bootstrap the training set, randomness only comes from $\hat\beta_i$.
Therefore, it can be derived that:

\begin{equation}
\text{Var}\!\big(\phi_i(x^\ast)\big)
\;=\; (x^\ast_i-\mu_i)^2 \,\text{Var}(\hat\beta_i)
\;=\; (x^\ast_i-\mu_i)^2\;\frac{\sigma^2}{n\,\text{Var}(x_i)}\,\text{VIF}(x_i).
\end{equation}

Because $\text{VIF}(x_i)\ge 1$, we have $\text{VIF}(x_i)\ge (\text{VIF}(x_i)-1)$, and thus

\begin{equation}
\text{Var}\!\big(\phi_i(x^\ast)\big)
\;\ge\; \underbrace{\frac{(x^\ast_i-\mu_i)^2\,\sigma^2}{n\,\text{Var}(x_i)}}_{=:~c(x^\ast)>0}\;(\text{VIF}(x_i)-1).
\end{equation}

Averaging over a fixed evaluation set $\mathcal{D}_{\text{eval}}$ of test points replaces $(x^\ast_i-\mu_i)^2$ by its empirical mean $\frac{1}{|\mathcal{D}_{\text{eval}}|}\sum_{x^\ast\in\mathcal{D}_{\text{eval}}}(x^\ast_i-\mu_i)^2$, yielding the same linear-in-$\text{VIF}$ scaling with a model- and evaluation-dependent constant $c>0$, thus, proving the second part of the theorem in the case of linear models.

\medskip
\noindent\textbf{Part (C): Extension to SHAP/LIME via local linear surrogates.}
Further to the earlier parts, it is possible to expand the derivation to provide proof for model-agnostic explainable AI methods such as SHAP and LIME. These methods depend on local linear surrogates to explain predictions of any model $f$. Kernel SHAP and LIME estimate attributions by solving a weighted local linear regression around $x^\ast$:

\begin{equation}
\hat\beta(x^\ast) \;=\; \arg\min_\beta \|W^{1/2}(f(Z)-Z\beta)\|_2^2,
\quad\text{so}\quad
\text{Var}\big(\hat\beta(x^\ast)\big) \;=\; \sigma^2\,(Z^\top W Z)^{-1}.
\end{equation}

Where: $Z$ is a locally sampled design and $W$ are locality weights.
In this case it is possible to apply again the Schur-complement identity approach to $Z^\top W Z$. Hence, it can be derived that: 

\begin{equation}
\text{Var}\big(\hat\beta_i(x^\ast)\big)
\;=\; \frac{\sigma^2}{S^{(W)}_{ii}}\;\text{VIF}^{(W)}(x_i\mid x^\ast),
\end{equation}

where $S^{(W)}_{ii}$ is the weighted sum of squares of the local $i$th feature and $\text{VIF}^{(W)}$ is the \emph{local} weighted VIF induced by $(Z,W)$.
This translates to the following $i$-th local attribution:

\begin{equation}
\phi_i(x^\ast)=\hat\beta_i(x^\ast)\cdot (x^\ast_i-\mu_i)
\end{equation} 

Which, if applied to the same restriction as in Part B, obeys: 

\begin{equation}
\text{Var}\!\big(\phi_i(x^\ast)\big)
\;\ge\; c^{(W)}(x^\ast)\,\big(\text{VIF}^{(W)}(x_i\mid x^\ast)-1\big),\qquad
c^{(W)}(x^\ast)=\frac{(x^\ast_i-\mu_i)^2\,\sigma^2}{S^{(W)}_{ii}}.
\end{equation}

Thus, it is proven that explanation methods that \emph{estimate} a local linear surrogate inherit the same qualitative dependence for non-linear $f$.

\noindent\textbf{Remarks.}
(i) The constants $c$ and $c^{(W)}(x^\ast)$ depend on noise level $\sigma^2$, sample size $n$, the feature scale, and the evaluation point. They are positive and finite under standard regularity (finite second moments).\\
(ii) When $R_i^2\to 1$ and $\text{VIF}(x_i)\to\infty$, the linear system becomes singular and both coefficient estimates and linear attributions become ill-posed or non-identifiable.\\
(iii) For tree ensembles, SHAP implementations compute exact values under model-specific assumptions such as  TreeSHAP. In this case, the degeneracy of the input covariance still induces instability in conditional expectations used by some SHAP semantics, again leading to larger attribution variance.

\noindent\textbf{Conclusion.}
This formalises that multicollinearity \emph{induces experimental fragility} in feature attributions, and justifies VIF-based pruning as a principled way to improve attribution stability.
\end{proof}

\section{Raw results per tested model}
\label{sec:results}

\subsection{LR}

\begin{table}[H]
\centering
\caption{Kendall’s $\tau$ rank correlation for SHAP feature importance stability in Logistic Regression. 
Higher values indicate more stable feature rankings across bootstrap samples.}
\label{tab:kendall_lr}
\begin{tabular}{lc}
\toprule
\textbf{Model} & \textbf{$\tau$ Top-20 features} \\
\midrule
LR (Full) & 0.22 $\;\rightarrow\;$ Low stability (fragile explanations) \\
LR (VIF)  & 0.77 $\;\rightarrow\;$ High stability (robust explanations) \\
\bottomrule
\end{tabular}
\end{table}

\begin{table}[H]
\centering
\caption{Top fragile features before and after VIF pruning for Logistic Regression. 
Fragility values represent the variance-to-mean ratio of absolute SHAP values. 
Lower fragility indicates more stable feature attribution. 
Features removed due to high multicollinearity are shown as ``--''.}
\label{tab:fragility_lr}
\begin{tabular}{lcc}
\toprule
\textbf{Feature} & \textbf{Fragility LR Full} & \textbf{Fragility LR VIF} \\
\midrule
dload              & 16.42 & 12.62 \\
sbytes             & 9.33  & --    \\
rate               & 2.42  & 2.50  \\
dbytes             & 2.36  & --    \\
sload              & 0.79  & 0.83  \\
response\_body\_len & 0.13  & 0.13  \\
stcpb              & 0.03  & 0.04  \\
dtcpb              & 0.03  & 0.03  \\
sjit               & 0.03  & 0.03  \\
sinpkt             & 0.01  & --    \\
djit               & 0.0005 & 0.0005 \\
dinpkt             & 0.0001 & 0.0001 \\
spkts              & $2.3\times10^{-6}$ & -- \\
dmean              & $1.8\times10^{-6}$ & $2.0\times10^{-6}$ \\
smean              & $1.2\times10^{-6}$ & $1.0\times10^{-6}$ \\
dpkts              & $9.7\times10^{-7}$ & -- \\
sloss              & $2.8\times10^{-7}$ & -- \\
dloss              & $1.9\times10^{-7}$ & -- \\
dttl               & $1.1\times10^{-7}$ & -- \\
sttl               & $4.9\times10^{-8}$ & -- \\
\bottomrule
\end{tabular}
\end{table}

\begin{table}[ht]
\centering
\caption{Performance comparison of Logistic Regression before and after VIF pruning. 
Values are reported as percentages for Accuracy, Precision, Recall, and F1, and as proportions for ROC AUC.}
\label{tab:performance_lr}
\begin{tabular}{lcc}
\toprule
\textbf{Metric} & \textbf{LR Full} & \textbf{LR VIF} \\
\midrule
Accuracy   & 80.7\% & 78.9\% \\
Precision  & 78.8\% & 77.8\% \\
Recall     & 95.6\% & 93.8\% \\
F1-score   & 86.4\% & 85.1\% \\
ROC AUC    & 0.750  & 0.732 \\
\bottomrule
\end{tabular}
\end{table}

\subsection{XG BOOST}

\begin{table}[H]
\centering
\caption{Top fragile features before and after VIF pruning for XGBoost. 
Fragility values represent the variance-to-mean ratio of absolute SHAP values. 
Lower fragility indicates more stable feature attribution. 
Features removed due to high multicollinearity are shown as ``--''.}
\label{tab:fragility_xgb_full}
\begin{tabular}{lcc}
\toprule
\textbf{Feature} & \textbf{Fragility XGboost Full} & \textbf{Fragility XGboost VIF} \\
\midrule
proto\_ospf          & 1.95 & --\\
sttl                 & 1.68 & --\\
service\_smtp        & 0.84 & 1.29 \\
state\_RST           & 0.77 & --\\
proto\_unas          & 0.63 & --\\
sinpkt               & 0.61 & --\\
smean                & 0.60 & 0.59 \\
ct\_state\_ttl       & 0.57 & --\\
dbytes               & 0.40 & --\\
response\_body\_len  & 0.31 & 0.56 \\
dmean                & 0.29 & 0.30 \\
service\_http        & 0.29 & 0.53 \\
dloss                & 0.27 & --\\
ct\_dst\_src\_ltm    & 0.26 & --\\
ct\_dst\_sport\_ltm  & 0.26 & --\\
sbytes               & 0.25 & --\\
sloss                & 0.23 & --\\
service\_ftp-data    & 0.19 & 3.70 \\
service\_dns         & 0.19 & 0.31 \\
djit                 & 0.18 & 0.27 \\
dload                & -- & 1.91 \\
proto\_igmp           & -- & 0.72 \\
proto\_sep            & -- & 0.80 \\
\bottomrule
\end{tabular}
\end{table}

\begin{table}[H]
\centering
\caption{Kendall’s $\tau$ rank correlation for SHAP feature importance stability in XGBoost. 
Higher values indicate more stable feature rankings across bootstrap samples.}
\label{tab:kendall_xgb}
\begin{tabular}{lc}
\toprule
\textbf{Model} & \textbf{$\tau$ Top-20 features} \\
\midrule
XGB (Full) & 0.44 $\;\rightarrow\;$ Moderate stability \\
XGB (VIF)  & 0.68 $\;\rightarrow\;$ Strong stability (robust rankings) \\
\bottomrule
\end{tabular}
\end{table}

\begin{table}[H]
\centering
\caption{Performance comparison of XGBoost before and after VIF pruning. 
Values are reported as percentages for Accuracy, Precision, Recall, and F1, and as proportions for ROC AUC.}
\label{tab:performance_xgb}
\begin{tabular}{lcc}
\toprule
\textbf{Metric} & \textbf{XGB Full} & \textbf{XGB VIF} \\
\midrule
Accuracy   & 94.4\% & 93.0\% \\
Precision  & 96.1\% & 94.4\% \\
Recall     & 95.0\% & 94.6\% \\
F1-score   & 95.6\% & 94.5\% \\
ROC AUC    & 0.94   & 0.92   \\
\bottomrule
\end{tabular}
\end{table}

\subsection{Random forests}

\begin{table}[H]
\centering
\caption{Kendall’s $\tau$ rank correlation for SHAP feature importance stability in Random Forests. 
Higher values indicate more stable feature rankings across bootstrap samples.}
\label{tab:kendall_rf}
\begin{tabular}{lc}
\toprule
\textbf{Model} & $\tau$ Top-20 features \\
\midrule
\textbf{RF Full} & 0.11 $\rightarrow$ Low stability (fragile explanations) \\
\textbf{RF VIF}  & 0.51 $\rightarrow$ Moderate stability (improved explanations) \\
\bottomrule
\end{tabular}
\end{table}

\begin{table}[H]
\centering
\caption{Performance comparison of Random Forest before and after VIF pruning. 
Values are reported as percentages for Accuracy, Precision, Recall, and F1, and as proportions for ROC AUC.}
\label{tab:performance_RF}
\begin{tabular}{lcc}
\toprule
\textbf{Metric} & \textbf{RF Full} & \textbf{RF VIF} \\
\midrule
Accuracy  & 94.7\% & 92.5\% \\
Precision & 96.0\% & 94.5\% \\
Recall    & 95.6\% & 93.8\% \\
F1        & 95.8\% & 94.1\% \\
ROC AUC   & 0.943  & 0.920 \\
\bottomrule
\end{tabular}
\end{table}

\begin{table}[H]
\centering
\caption{Top fragile features before and after VIF pruning for Random Forest. Fragility values represent the variance-to-mean ratio of absolute SHAP values. 
Lower fragility indicates more stable feature attribution. 
Features removed due to high multicollinearity are shown as ``--''.}
\label{tab:fragility_rf}
\begin{tabular}{lcc}
\toprule
\textbf{Feature} & \textbf{Fragility RF Full} & \textbf{Fragility RF VIF} \\
\midrule
sttl               & 0.108 & -- \\
smean              & 0.092 & 0.109 \\
proto\_arp         & 0.083 & -- \\
state\_REQ         & 0.074 & -- \\
sinpkt             & 0.065 & -- \\
proto\_udp         & 0.035 & -- \\
synack             & 0.031 & -- \\
service\_http      & 0.027 & 0.072 \\
dbytes             & 0.024 & -- \\
ct\_srv\_dst       & 0.023 & -- \\
sbytes             & 0.023 & -- \\
sloss              & 0.021 & -- \\
ct\_dst\_src\_ltm  & 0.021 & -- \\
response\_body\_len& 0.021 & 0.028 \\
ct\_src\_dport\_ltm& 0.020 & -- \\
dloss              & 0.018 & -- \\
ct\_dst\_sport\_ltm& 0.015 & -- \\
sload              & 0.014 & 0.023 \\
ct\_srv\_src       & 0.013 & -- \\
sjit               & 0.013 & 0.030 \\
\bottomrule
\end{tabular}
\end{table}

\subsection{MLP NN}

\begin{table}[H]
\centering
\caption{Performance comparison of MPL NN before and after VIF pruning. 
Values are reported as percentages for Accuracy, Precision, Recall, and F1, and as proportions for ROC AUC.}
\label{tab:performance_mlp}
\begin{tabular}{lcc}
\toprule
\textbf{Metric} & \textbf{MLP NN Full} & \textbf{MLP NN VIF} \\
\midrule
Accuracy  & 92.8\% & 88.8\% \\
Precision & 94.3\% & 90.3\% \\
Recall    & 94.4\% & 92.4\% \\
F1        & 94.4\% & 91.3\% \\
ROC AUC   & 0.922  & 0.874 \\
\bottomrule
\end{tabular}
\end{table}

\begin{table}[H]
\centering
\caption{Kendall’s $\tau$ rank correlation for SHAP feature importance stability in MPL NN. 
Higher values indicate more stable feature rankings across bootstrap samples.}
\label{tab:kendall_mlpnn}
\begin{tabular}{lcc}
\toprule
\textbf{Model} & \textbf{Top-20 Features} & \textbf{Top-50 Features} \\
\midrule
MLP (Full) & 0.640 & 0.632 \\
MLP (VIF)  & 0.730 & 0.727 \\
\bottomrule
\end{tabular}
\end{table}

\begin{table}[H]
\centering
\caption{Top fragile features before and after VIF pruning for NPL NN. 
Fragility values represent the variance-to-mean ratio of absolute SHAP values. 
Lower fragility indicates more stable feature attribution. 
Features removed due to high multicollinearity are shown as ``--''.}
\label{tab:fragility_mlpnn}
\begin{tabular}{lcc}
\toprule
\textbf{Feature} & \textbf{Fragility MLP Full} & \textbf{Fragility MLP VIF} \\
\midrule
sload          & $6.86 \times 10^{16}$ & 0.061 \\
dmean          & $6.86 \times 10^{16}$ & 0.088 \\
service\_dns   & 1.063 & 0.037 \\
smean          & 0.906 & 0.073 \\
service\_radius & 0.524 & 0.311 \\
proto\_idrp    & 0.394 & 0.000 \\
proto\_visa    & 0.373 & 0.130 \\
proto\_idpr-cmtp & 0.310 & 0.229 \\
proto\_mfe-nsp & 0.304 & 0.215 \\
proto\_leaf-2  & 0.301 & 0.312 \\
\bottomrule
\end{tabular}
\end{table}

\subsection{Convolutional Neural Networks}

\begin{table}[H]
\centering
\caption{Performance comparison of CNN before and after VIF pruning. 
Values are reported as percentages for Accuracy, Precision, Recall, and F1, and as proportions for ROC AUC.}
\label{tab:performance_cnn}
\begin{tabular}{lcc}
\toprule
\textbf{Metric} & \textbf{CNN Full} & \textbf{CNN VIF} \\
\midrule
Accuracy  & 92.95\% & 86.15\% \\
Precision & 94.15\% & 87.90\% \\
Recall    & 94.86\% & 90.83\% \\
F1 Score  & 94.50\% & 89.34\% \\
ROC AUC   & 0.922   & 0.843   \\
\bottomrule
\end{tabular}
\end{table}

\begin{table}[H]
\centering
\caption{Kendall’s $\tau$ rank correlation for SHAP feature importance stability in CNN. 
Higher values indicate more stable feature rankings across bootstrap samples.}
\label{tab:kendall_cnn}
\begin{tabular}{lcc}
\toprule
\textbf{Model} & \textbf{Top-20 Features} & \textbf{Top-50 Features} \\
\midrule
CNN Full & 0.03 $\rightarrow$ Very Low Stability & 0.04 $\rightarrow$ Very Low Stability \\
CNN VIF  & 0.12 $\rightarrow$ Low Stability       & 0.34 $\rightarrow$ Moderate Stability \\
\bottomrule
\end{tabular}
\end{table}

\begin{table}[H]
\centering
\caption{Top fragile features before and after VIF pruning for CNN. 
Fragility values represent the variance-to-mean ratio of absolute SHAP values. 
Lower fragility indicates more stable feature attribution. 
Features removed due to high multicollinearity are shown as ``--''.}
\label{tab:fragility_cnn}
\begin{tabular}{lcc}
\toprule
\textbf{Feature} & \textbf{Fragility CNN Full} & \textbf{Fragility CNN VIF} \\
\midrule
state\_RST       & 0.435 & --     \\
proto\_vines     & 0.273 & 0.238  \\
dinpkt           & 0.244 & 0.034  \\
proto\_zero      & 0.230 & 0.236  \\
proto\_idpr      & 0.229 & 0.257  \\
proto\_aes-sp3-d & 0.222 & 0.254  \\
proto\_stp       & 0.214 & 0.244  \\
proto\_ib        & 0.213 & 0.246  \\
proto\_st2       & 0.194 & 0.285  \\
proto\_merit-inp & 0.192 & 0.203  \\
\bottomrule
\end{tabular}
\end{table}

\subsection{LSTMs}

\begin{table}[H]
\centering
\caption{Performance comparison of LSTM before and after VIF pruning. 
Values are reported as percentages for Accuracy, Precision, Recall, and F1, and as proportions for ROC AUC.}
\label{tab:performance_lstm}
\begin{tabular}{lcc}
\toprule
\textbf{Metric} & \textbf{LSTM Full} & \textbf{LSTM VIF} \\
\midrule
Accuracy  & 94.9\% & 94.7\% \\
Precision & 96.9\% & 96.1\% \\
Recall    & 95.0\% & 95.6\% \\
F1        & 95.9\% & 95.9\% \\
ROC AUC   & 0.991  & 0.991 \\
\bottomrule
\end{tabular}
\end{table}

\begin{table}[H]
    \centering
\caption{Kendall’s $\tau$ rank correlation for SHAP feature importance stability in LSTM. 
Higher values indicate more stable feature rankings across bootstrap samples.}
\label{tab:kendall_LSTMs}
\begin{tabular}{lcc}
\toprule
\textbf{Model} & \textbf{Top-20 features} & \textbf{Top-50 features} \\
\midrule
LSTM Full & $0.28$ $\rightarrow$ Moderate Stability & $0.14$ $\rightarrow$ Low Stability\\
LSTM VIF  & $0.30$ $\rightarrow$ Moderate Stability & $0.17$ $\rightarrow$ Low Stability\\
\bottomrule
\end{tabular}
\end{table}

\begin{table}[H]
\centering
\caption{Top fragile features before and after VIF pruning for LSTM. 
Fragility values represent the variance-to-mean ratio of absolute SHAP values. 
Lower fragility indicates more stable feature attribution. 
Features removed due to high multicollinearity are shown as ``--''.}
\label{tab:fragility_lstm}
\begin{tabular}{lcc}
\toprule
\textbf{Feature} & \textbf{Fragility LSTM Full} & \textbf{Fragility LSTM VIF} \\
\midrule
dttl              & $0.108$ & $0.137$ \\
id                & $0.069$ & $0.056$ \\
ct\_dst\_src\_ltm & $0.065$ & $0.076$ \\
sttl              & $0.064$ & $0.069$ \\
ct\_state\_ttl    & $0.060$ & $0.068$ \\
dmean             & $0.052$ & $0.054$ \\
service\_dns      & $0.050$ & $0.046$ \\
swin              & $0.035$ & $0.034$ \\
proto\_udp        & $0.033$ & $0.032$ \\
proto\_tcp        & $0.025$ & $0.044$ \\
proto\_unas       & $0.023$ & $0.009$ \\
dload             & $0.021$ & $0.026$ \\
state\_REQ        & $0.020$ & $0.020$ \\
dwin              & $0.019$ & $0.026$ \\
ct\_src\_dport\_ltm & $0.018$ & $0.018$ \\
synack            & $0.016$ & $0.017$ \\
ct\_dst\_ltm      & $0.015$ & $0.009$ \\
ct\_dst\_sport\_ltm & $0.014$ & $0.015$ \\
ct\_srv\_src      & $0.014$ & $0.012$ \\
smean             & $0.012$ & $0.011$ \\
\bottomrule
\end{tabular}
\end{table}

\subsection{RBF SVM}

\begin{table}[H]
    \centering
\caption{Kendall’s $\tau$ rank correlation for SHAP feature importance stability in RBF SVM. 
Higher values indicate more stable feature rankings across bootstrap samples.}
\label{tab:kendall_rbfsvm}
\begin{tabular}{lcc}
\toprule
\textbf{Model} & \textbf{Top-20 features} & \textbf{Top-50 features} \\
\midrule
SVM Full & $0.032$ $\rightarrow$ Very Low Stability & $0.053$ $\rightarrow$ Very Low Stability\\
SVM VIF & $0.045$ $\rightarrow$ Very Low Stability & $0.079$ $\rightarrow$ Very Low Stability \\
\bottomrule
\end{tabular}
\end{table}

\begin{table}[H]
\centering
\caption{Performance comparison of RBF SVM before and after VIF pruning. 
Values are reported as percentages for Accuracy, Precision, Recall, and F1, and as proportions for ROC AUC.}
\label{tab:performance_rbfsvm}
\begin{tabular}{lcc}
\toprule
\textbf{Metric} & \textbf{RBF SVM Full} & \textbf{RBF SVM VIF} \\
\midrule
Accuracy  & 89.7\% & 89.7\% \\
Precision & 86.6\% & 86.6\% \\
Recall    & 99.2\% & 99.2\% \\
F1        & 92.5\% & 92.5\% \\
ROC AUC   & 0.972  & 0.972 \\
\bottomrule
\end{tabular}
\end{table}

\begin{table}[H]
\centering
\caption{Top fragile features before and after VIF pruning for RBF SVM. 
Fragility values represent the variance-to-mean ratio of absolute SHAP values. 
Lower fragility indicates more stable feature attribution. 
Features removed due to high multicollinearity are shown as ``--''.}
\label{tab:fragility_rbfsvm}
\begin{tabular}{lcc}
\toprule
\textbf{Feature} & \textbf{Fragility SVM Full} & \textbf{Fragility SVM VIF} \\
\midrule
proto\_sctp        & $0.555$ & $0.597$ \\
proto\_sun-nd      & $0.492$ & $0.514$ \\
proto\_mobile      & $0.484$ & $0.157$ \\
proto\_br-sat-mon  & $0.416$ & $0.474$ \\
state\_REQ         & $0.407$ & $0.432$ \\
proto\_dcn         & $0.393$ & $0.462$ \\
service\_ssl       & $0.392$ & $0.398$ \\
proto\_a/n         & $0.388$ & $0.447$ \\
proto\_any         & $0.363$ & $0.429$ \\
proto\_emcon       & $0.333$ & $0.390$ \\
proto\_igp         & $0.325$ & $0.390$ \\
proto\_sat-mon     & $0.321$ & $0.386$ \\
proto\_sat-expak   & $0.294$ & $0.356$ \\
proto\_ifmp        & $0.282$ & $0.366$ \\
proto\_srp         & $0.276$ & $0.340$ \\
proto\_wsn         & $0.274$ & $0.364$ \\
proto\_dgp         & $0.267$ & $0.323$ \\
ct\_flw\_http\_mthd & $0.260$ & $0.254$ \\
proto\_larp        & $0.257$ & $0.341$ \\
proto\_pnni        & $0.254$ & $0.304$ \\
\bottomrule
\end{tabular}
\end{table}

\end{document}